\theoremstyle{definition}
\newtheorem{definition}{Definition}[section]
\theoremstyle{remark}
\newtheorem*{remark}{Remark}
\newtheorem{theorem}{Theorem}[section]
\newtheorem{lemma}[theorem]{Lemma}
\definecolor{codegreen}{rgb}{0,0.6,0}
\definecolor{codegray}{rgb}{0.5,0.5,0.5}
\definecolor{codepurple}{rgb}{0.58,0,0.82}
\definecolor{backcolour}{rgb}{0.98,0.98,0.99}
\lstdefinestyle{mystyle}{
    backgroundcolor=\color{backcolour},   
    commentstyle=\color{codegreen},
    keywordstyle=\color{red},
    numberstyle=\tiny\color{codegray},
    stringstyle=\color{gray},
    basicstyle=\ttfamily\footnotesize,
    breakatwhitespace=false,         
    breaklines=true,                 
    captionpos=b,                    
    keepspaces=true,                 
    numbers=left,                    
    numbersep=5pt,                  
    showspaces=false,                
    showstringspaces=false,
    showtabs=false,                  
    tabsize=2
}
\begin{document}

\title{e3nn: Euclidean Neural Networks}
\author{Mario Geiger and Tess Smidt}
\date{January 2021}

\begin{abstract}
    We present \texttt{e3nn}, a generalized framework for creating E(3) equivariant trainable functions, also known as Euclidean neural networks. \texttt{e3nn} naturally operates on geometry and geometric tensors that describe systems in 3D and transform predictably under a change of coordinate system. The core of \texttt{e3nn} are equivariant operations such as the \texttt{TensorProduct} class or the spherical harmonics functions that can be composed to create more complex modules such as convolutions and attention mechanisms. These core operations of \texttt{e3nn} can be used to efficiently articulate Tensor Field Networks, 3D Steerable CNNs, Clebsch-Gordan Networks, SE(3) Transformers and other E(3) equivariant networks. 
\end{abstract}

\maketitle

\section{Introduction}

{\color{red} This document is a draft.}

% The laws of physics are equivariant to rotations, translations and inversion therefore almost all machine learning task physics related are as well equivariant to these transformations.
% If the machine learning model is built to be equivariant it avoids the need for data-augmentation.

3D Geometry (e.g. points in 3D space, volumes, meshes or surfaces) and geometric tensors (e.g. vector fields or strain fields) are traditionally challenging data types to use for machine learning because coordinates and coordinate systems are sensitive to the symmetries of 3D space: 3D rotations, translations, and inversion. One of the motivations for incorporating symmetry into machine learning models on 3D data is to eliminate the need for data augmentation -- the 500-fold increase in brute-force training necessary for a model to learn 3D patterns in arbitrary orientations. 

% Many symmetry-aware machine learning models in the physical sciences avoid augmentation through invariance, throwing away coordinate systems altogether.
% The so called \textit{invariant models} operate in two steps: feature extraction of invariant quantities followed by a machine learning model operating on scalars\footnote{Scalar is a synonymous for invariant quantity}.
Invariant models are more straightforward to implement; they only operate on scalars which interact thought simple scalar multiplication. Invariant models are restricted in what they can express. The way physical quantities transform under rotations is classified into the so-called \textit{representations} of rotations, which are indexed by the integer $l$. Scalars corresponds to $l=0$ (they don't change under rotation), vectors to $l=1$, and higher order quantities like for instance the d, f, g, ... orbitals transform with higher $l=2, 3, 4, \dots$. Only scalars ($l=0$) are invariant under rotation.
Invariant models can't predict a $l>0$ quantity that would change with the rotation of the input data.

Sometimes in order to articulate a task in an equivariant manner requires reframing the question -- but this effort is often rewarded because reframing can highlight hidden symmetry-breaking biases that often occur when equivariance is not imposed (e.g. the fact that linear algebra solvers report specific eigenvectors for degenerate subspaces when there is truly a degeneracy of bases within that subspace).

We think equivariant models are not just useful to predict $l>0$ quantities or the beauty of articulating the task in an equivariant way. It is true that models restricted to $l=0$ and $l=1$ are expressive enough to be universal\cite{universality_tfn_dymhaggai2020,scalar} which means the they are able to fit any equivariant function of output $l\leq 1$.
However machine learning in addition to expressivity also requires generalization and data efficiency.
Generalization and data efficiency can be measured by the so called learning curve -- the generalization error as a function of the size of the training set.
It has been observed that models including internal representations of order $l=2$ display a better learning curve than models only containing $l\leq 1$ internal representations \cite{md_batzner2021,NeuralScaling2022}.

Equivariance is both elegant and simple, however, implementing equivariant neural networks can be technical and error prone. This is why we developed a python library that allows researchers to conveniently explore the realm of equivariant neural networks for 3D data.
In this paper, we introduce \texttt{e3nn} a software framework for articulating E(3) equivariant models. 
We have built \texttt{e3nn} to take care of the intricacies of geometric tensor algebra so the user can focus on building trainable modules for diverse scientific and engineering domains. We describe in detail the provably general mathematical primitives that we have developed and how they can be easily composed to form the modules of a wide variety of $E(3)$-equivariant architectures. This article serves as a resource for the relevant mathematics for understanding and developing these types of networks.

\texttt{e3nn} is born from the two papers \cite{3dsteerable,tfn}. In its first versions it had many different functions used to build different kind of equivariant layers. In January 2021, all these operations has been fused into an unified and powerful parameterizable operation called \texttt{TensorProduct}.

% Vector cross-products and higher order tensor interactions cannot be expressed with an invariant network. Furthermore, only equivariant methods, which treat the transformation properties of coordinate systems faithfully, can express many of the rich consequences of Euclidean symmetry that manifest in physical systems: degeneracy, point groups, space groups, atomic orbitals, and 2nd order phase transitions \cite{SmidtGeiger2021}.

% % {\color{red} To remove? Something about universality of these operations \cite{universality_tfn_dymhaggai2020}.}
% Re: Scalars paper -- This paper basically states that given scalars that are generated from sufficiently higher-order equivariant functions, one learn any invariant task. While this is true, it may not always be practical or effective, especially for complex physics tasks such as making predictions of quantum mechanical properties where necessary equivariant operations are not obvious or the same for different systems.

% {\color{red} check again}
% We will cover the following: 1) mathematical pre-requistites for discussing equivariant models, 2) how these mathematical operations are practically implemented, 3) how to compose these operations to create neural network layers, 4) how to articulate previous works with \texttt{e3nn}, 5) future work.

\section{Related work}
% This maybe should go later -- might depend on the journal

Euclidean neural networks encompass the class of Euclidean symmetry equivariant neural networks introduced in Refs. \cite{tfn,3dsteerable,kondor2018clebsch}.

% Symmetry aware through invariant input representations
There is a long history of using symmetry considerations to construct machine learning representations and models in the molecular and materials sciences. It is common to use Behler-Parinello symmetry functions \cite{behler-2007-gener-neural} ACE \cite{ace} or SOAP kernels \cite{soap} to craft invariant representations of local atomic environments that are used to make per atom predictions of energy.

Currently, most symmetry-aware neural networks used in the molecules and materials community are invariant models, where all operations of the trainable model act on scalar quantities \cite{schnet, schnet2, CGCNN}. DimeNet uses angular information between triplets of atoms by pre-computing these angles and using the resulting scalars as part of neural network operations \cite{dimenet}.

There also exist restricted equivariant methods, that use only scalar or vector operations on scalar and vector representations; these are a strict subset of the operations allowed with an equivariant network \cite{En_gnn, equiv_message_passing}.
% [Also Guibas group paper etc]
In some cases, these restricted operations are adequate for good performance on learning tasks, but these implementations do not span the full space of possible equivariant operations and representations.

There are also equivariant kernel methods where all equivariant operations are contained in the construction of kernel \cite{PhysRevLett.120.036002}.

Recently, Ref. \cite{finzi21_2104.09459} have developed methods for deriving equivariant linear layers for general groups when provided the group generators.

Many papers have documented improved accuracy on training tasks when going from invariant to equivariant models, even within the same framework \cite{miller2020,md_batzner2021}. %While many of these studies only investigated to vector representations, Ref. [anything to cite here?] observed that for certain datasets, higher rank tensor representations and operations give addition accuracy.

% Make argument about minimal invariant bases and how the number of invariants required to fully specify an equivariant quantity grows unfavorably e.g. 21 dof for elasticity but 300ish invariants to fully specify.

An important goal for \texttt{e3nn} is to provide a flexible framework to express the most general equivariant operations for all forms of 3D data. All methods discussed above can be succinctly articulated with the operations of \texttt{e3nn}.

So far, \texttt{e3nn} and Euclidean neural networks have been applied to diverse applications:
finding symmetry breaking order parameters \cite{SmidtGeiger2021},
predicting molecular properties \cite{miller2020},
prediction of phonon density of states \cite{phonons_Chen2021},
prediction of the interatomic potentials \cite{md_batzner2021},
developing deep learning for diffusion MRI \cite{mri_muller2021},
predicting the formation factor and effective permeability from micro-CT images \cite{https://doi.org/10.48550/arxiv.2104.05608},
approximating steady-state fluid flow around a fixed particle \cite{Siddani_2021},
defining a graph neural network for which node and edge attributes are not restricted to invariant
scalars \cite{https://doi.org/10.48550/arxiv.2110.02905},
generate coarse-graining of molecular conformations \cite{https://doi.org/10.48550/arxiv.2201.12176} and
simulate 100 million atoms \cite{https://doi.org/10.48550/arxiv.2204.05249}.

\section{Our Contribution}

In \texttt{e3nn} we have developed or implemented the following: (i) parity-equivariance such that our models are equivariant to $O(3)$ rather than only $SO(3)$ (Section~\ref{sec:parity}), (ii) a general \texttt{TensorProduct} class that allows complete flexibility creating trainable equivariant bi-linear operations between irreps (Section~\ref{sec:tp}), (iii) the \texttt{ReduceTensorProduct} class which can decompose tensors of arbitrary rank (with extra index symmetry constraints like for instance $x_{ij} = -x_{ji}$) onto a direct sum of irreps of $O(3)$ (Section~\ref{sec:rtp}), Fast conversion between irreps and signal on the sphere (Section~\ref{sec:s2}) and (iv) we have coded all the base operations needed to build equivariant modules of arbitrary complexity (Section~\ref{sec:equiv_modules}).

We emphasize that these methods are useful for any tensor computations, not only in a deep learning context.

\section{Group Representations} \label{sec:math}

\subsection{Group}
\begin{definition}
A group is a mathematical structure made of a set $G$ and a composition law $G\times G\to G$ that obey a list of rules. (i) It contains an identity element $e\in G$ such that $ae=ea=a$ for all $a\in G$, (ii) It obeys associativity $(ab)c=a(bc)$ and (iii) each element has an inverse $a a^{-1}=a^{-1} a=e$. (See \cite{zee} for a more rigorous definition and discussion about left and right inverses).
\end{definition}
The set of all 3D rotations form a group.
The set of all 3D translations form a group.
The set containing the identity and the inversion form a small group too.
The set made of all the composition of those transformations (rotations, translations and inversion) form the Euclidean Group.

\subsection{Representations}
An important perspective shift to make when using Euclidean neural networks is that all data in the networks are typed by how the data transforms under the Euclidean Group.
The mathematical structure that describe such a type are the \textit{group representations}.
We will restrict ourselves to finite dimensional representations.

\begin{definition}
    For a given group $G$.
    A function $D : G \longrightarrow \mathbb{R}^{d\times d}$ that maps each element of the group to a $d\times d$ matrix is a representation iff it follows the structure of the group: $D(e)=1$, $D(ab)=D(a)D(b)$ (See again \cites{zee} for a more rigorous definition.)
\end{definition}

\begin{definition}
An irreducible representation is a representation that does not contain a smaller representation in it. i.e. there is no nontrivial projector $P\in\mathbb{R}^{q\times d}$ such that the function $g\mapsto P D(g) P^t$ is a representation.
\end{definition}

The data types of \texttt{e3nn} are the irreducible representations (irreps) of $O(3)$.
Another way to think of this is that in order for our network to preserve transformation properties of our data, we must tell our network how the data transforms in the first place.
We use irreducible representations because they are the smallest representation and complete \cite{zee}, i.e any finite dimensional representation of $O(3)$ can be decomposed on to the irreducible representations of $O(3)$.
The irreps of $O(3)$ factorize into the irreps of $SO(3)$ and the irreps of inversion (also called parity).

\subsection{Irreducible representations of rotations}

The irreps of $SO(3)$ are indexed by the integers $0, 1, 2, \dots$, we call this index $l$. 
The $l$-irrep is of dimension $2l+1$.
This is shown for instance in \cite{rao}.\footnote{The demonstration is done starting from the commutation relations of the generators of rotation and introducing raising and lowering operators.}
$l=0$ (dimension 1) corresponds to \textit{scalars} and $l=1$ (dimension 3) corresponds to \textit{vectors}.
Higher $l$-irrep are less common, the best example for $l=2$ is the decomposition of a symmetric rank 2 tensor.

% For parity there is 2 irreps: \textit{even} and \textit{odd}. We index these representations by the symbol $p=\pm1$, quantities transform as $x\longrightarrow px$. When an inversion is applied to the system, even quantities don't change and odd quantities change sign.

% \subsection{Representations of 3D Data}
% In order to write down a geometric tensors, one must choose a representation. For example, a 3D rotation matrix can be expressed using a multitude of representations: a $3\times3$ Cartesian matrix (each index of a Cartesian matrices spans $(x, y, z)$), an axis and an angle, or Euler angles. How you write down a given quantity depends on your representation but it does not change the nature of what you are representing. Different representations have different advantages; some are algebraically simpler to use, others can make patterns in data more clear, and some representations are more concise than others. The most concise representations of a given group are the irreducible representations. 

\subsection{Example of the symmetric matrix}

A symmetric $3\times3$ Cartesian tensor comprises of 9 numbers but only 6 degrees of freedom (the 3 numbers on the diagonals and 3 numbers that fill the off diagonal entries). It transforms under rotation as 
\begin{equation}
    x \longrightarrow R x R^t
\end{equation}
where $R$ here is some rotation matrix.
Keep in mind that although $x$ and $R$ are both $3\times3$ matrices, they are very different objects. The former represent a physical quantity measured in a given coordinate frame and the later represent a rotation between two coordinate frames.

We can express the matrix $x$ in terms of the irreducible representations of $SO(3)$. A symmetric $3\times3$ Cartesian matrix has a trace which is a single scalar $l=0$-irrep and 5 numbers comprising the symmetric traceless components which define the traceless part of the diagonal and off-diagonal elements $l=2$-irrep. With the irreducible representations, it is more clear that there are only 6 degrees of freedom. In the \texttt{e3nn} framework we denote the irreps of $x$ as \texttt{1x0e + 1x2e}. 
\texttt{1x} means there is one time the irrep, \texttt{+} denotes a direct sum of irreps and \texttt{e} denotes the parity of the irreps, that we introduce now.

\subsection{Parity} \label{sec:parity}
Parity acts on 3D geometry by inversion: $\vec x \longrightarrow -\vec x$.
Mirror and glide can be decomposed into a special rotation followed by an inversion. Or inversion followed by a special rotation since inversion and rotations commute.
There is two irreducible representations of parity: the even representation ($p=1$) and the odd representation ($p=-1$).
For instance, the velocity vector is odd, when changing the coordinates with  $\vec x \longrightarrow -\vec x$, the velocity vector will change its sign.
However, the cross product of two vectors give rise to an even quantity, $p=1$ i.e. a pseudovector.
A pseudovector will not change under inversion of the coordinate frame.
%Inversion, mirror, and glide symmetries require treatment of parity. many quantities of physical systems transform as irreps with opposite parity: rotation axes transform as pseudovectors and helices transform as odd $L=2$ objects \cite{Smidt2021}.

\subsection{Equivariance}

For a group $G$, a function $f: X \rightarrow Y$ is equivariant under $G$ if $f(D_X(g) x) = D_Y(g) f(x)$, $\forall g \in G, x 
\in X$ where $D_X(g)$ (resp. $D_Y(g)$) is the representation of the group element $g$ acting on the vector space $X$ (resp. $Y$). Another way to say this is that we can act with the group on the inputs or the outputs of the function and are guaranteed to get the same answer.

For Euclidean neural networks, all individual network operations are required to obey this equivariance where $G = E(3)$, the Euclidean group in 3 dimensions. By requiring individual components to be equivariant, we guarantee the equivariance of the entire neural network, for $f: X_1 \rightarrow X_2$ and $h: X_2 \rightarrow X_3$, if both functions are equivariant, their composition is as well equivariant: $h(f(D(g)x)) = h(D(g)f(x)) = D(g) h(f(x))$.

A function with learned parameters can be abstracted as $f: W\times X \rightarrow Y$, where $w \in W$ is a choice of learned parameters (or weights). We require the parameters to be scalars, meaning that they don't transform under a transformation of $E(3)$: $f(w, D_X(g)x) = D_Y(g) f(w, x)$. Note, this implies that weights are scalars and are invariant under any choice of coordinate system.

\section{Irreducible representations used in e3nn} \label{sec:repr}
Irreducible representations are unique up to a change of basis. If $D^l(R)$ is an irreducible representation of $SO(3)$ of order $l$ then for any invertible matrix $A$,
\begin{equation}
    \tilde D^l(R) = A D^l(R) A^{-1}
\end{equation}
is also a valid irreducible representation of $SO(3)$.
To remove this degeneracy people usually look for $D^l(R)$ such that the generator of rotation around an arbitrary chosen axis is diagonal. The generator $J_y^l$ is defined as the representation of the infinitesimal rotation around axis $y$,
\begin{equation}
    J_y^l = \lim_{\alpha \to 0} \frac{D^l(\text{$\alpha$ around $y$}) - D^l(e)}{\alpha}
\end{equation}
Diagonalizing $J_y^l$ for each $l$ gives a nice set of basis for the irreps $D^l$. In this basis the entries of the matrices $D^l$ are complex numbers. This is done for instance in \cite{rao}.

The representations of $SO(3)$ are real, in the sense that they are equivalent to their conjugate, there exists a change of basis $A$ such that,
\begin{equation}
    D^l(R)^* = A D^l(R) A^{-1}
\end{equation}
with $A$ symmetric, $A=A^T$ therefore there exists a change of basis $S$ in which $S D^l(R)^* S^{-1} = S D^l(R) S^{-1}$, see Chapter II.4 of \cite{zee}.

Therefore in \texttt{e3nn}, for memory and computation reasons, we opted to use real numbers.
We get the representations in the usual basis using a code from QuTip\cite{qutip} then we change basis using
\begin{equation} \label{eq:real_complex}
    z^l_m = \left\{ \begin{array}{ll}
        (-i)^l \frac1{\sqrt2} (x^l_{|m|} - i x^l_{-|m|}) & m < 0 \\
        (-i)^l & m=0 \\
        (-1)^m (-i)^l \frac1{\sqrt2} (x^l_{|m|} + i x^l_{-|m|}) & m > 0
    \end{array} \right.
\end{equation}

\section{Spherical harmonics} \label{sec:sh}

\subsection{Definition}

\begin{definition}
The spherical harmonics are a family of functions $Y^l$ from the unit sphere to the irrep $D^l$.
(We later relax the definition and defined them on $\mathbb{R}^3$.)
For each $l=0,1,2,\dots$ the spherical harmonics can be seen as a vector of $2l+1$ functions $Y^l(\vec x) = (Y^l_{-l}(\vec x), Y^l_{-l+1}(\vec x), \dots, Y^l_{l}(\vec x))$.
Each $Y^l$ is equivariant to SO(3) with respect to the irrep of the same order, i.e.
\begin{equation}
    Y^l_m(R \vec x) = \sum_{n=-l}^l D^l(R)_{mn} Y^l_n(\vec x)
\end{equation}
where $R$ is any rotation matrix and $D^l$ are the irreducible representation of SO(3).
They are normalized $\|Y^l(\vec x)\|=1$ when evaluated on the sphere ($\|\vec x\|=1$). Different normalization exists, this one is named \texttt{norm} in \texttt{e3nn} code.
\end{definition}

\begin{remark}
The spherical harmonics depends heavily on the choice of basis of the irreducible representation. $\tilde Y^l(\vec x) = A^l Y^l(\vec x)$ is the spherical harmonics of the irreps $A^l D^l(R) (A^l)^{-1}$ where $A^l$ is an invertible matrix.
\end{remark}

\begin{lemma}
Once the basis of irreps fixed, the spherical harmonics functions are unique for $\vec x$ on the unite sphere up to the choice of their sign $\pm Y^l$.
\end{lemma}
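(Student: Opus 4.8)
The plan is to prove uniqueness by exploiting the transitivity of the $SO(3)$ action on the unit sphere, which collapses the whole function $Y^l$ to its value at a single base point. First I would fix a reference point, say the north pole $\hat z$, and note that since $SO(3)$ acts transitively, every $\vec x$ with $\|\vec x\|=1$ can be written as $\vec x = R\hat z$ for some rotation $R$. The equivariance relation from the definition then gives $Y^l(\vec x) = Y^l(R\hat z) = D^l(R)Y^l(\hat z)$, so the entire function is determined by the single vector $Y^l(\hat z)\in\mathbb{R}^{2l+1}$. Because $D^l(R)$ is linear, two candidate spherical harmonics whose values at $\hat z$ agree up to a sign must agree up to that same sign everywhere; reducing the lemma to pinning down $Y^l(\hat z)$.

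Next I would constrain $Y^l(\hat z)$ using the stabilizer of $\hat z$. The rotations fixing $\hat z$ form a copy of $SO(2)$, namely rotations about the axis through $\hat z$. For any such $h$ we have $h\hat z=\hat z$, so equivariance forces $Y^l(\hat z) = D^l(h)Y^l(\hat z)$, i.e. $Y^l(\hat z)$ lies in the subspace of $\mathbb{R}^{2l+1}$ left invariant by the restricted representation $D^l|_{SO(2)}$. This invariance is also exactly what makes the reconstruction above well defined: if $R\hat z = R'\hat z$ then $R^{-1}R'$ fixes $\hat z$, so $D^l(R)Y^l(\hat z) = D^l(R')Y^l(\hat z)$.

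The key step is to show that this invariant subspace is exactly one-dimensional. Restricting the order-$l$ irrep to the circle subgroup $SO(2)$ decomposes it into the weight spaces of the generator of rotation about $\hat z$, carrying the characters indexed by $m=-l,\dots,l$ each with multiplicity one, and the only vectors fixed by all of $SO(2)$ are those of weight $m=0$. Hence the fixed subspace is a single line; let $v_0$ be a unit vector spanning it. The normalization condition $\|Y^l(\hat z)\|=1$, which by equivariance and transitivity is equivalent to $\|Y^l\|\equiv 1$ on the whole sphere since $D^l(R)$ is orthogonal, then forces $Y^l(\hat z) = \pm v_0$. Combined with the first paragraph, any solution is determined up to the single global sign, which is the claim.

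I expect the main obstacle to be establishing the one-dimensionality of the stabilizer-invariant subspace rigorously, since this is where the representation theory of $SO(3)$ genuinely enters: one must invoke the weight decomposition of $D^l$ restricted to a circle subgroup and check it survives the passage to the real basis of Section~\ref{sec:repr}, where the weight-$m$ and weight-$(-m)$ complex lines fuse into two-dimensional real rotation blocks while the $m=0$ line stays one-dimensional. An equivalent but more abstract route would replace this local computation by Schur's lemma: the components $Y^l_m$ span the unique multiplicity-one copy of the order-$l$ irrep inside the functions on the sphere, any two such copies transforming under the same fixed $D^l$ differ by an intertwiner, and because the real irreps of $SO(3)$ are of real type that intertwiner is a real scalar, cut down to $\pm 1$ by the normalization.
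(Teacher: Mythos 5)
Your proof is correct and takes essentially the same approach as the paper: both reduce $Y^l$ to its value at a single point by transitivity, pin that value to a one-dimensional subspace using the stabilizer of the point (you via the $SO(2)$ weight decomposition with multiplicity one at $m=0$, the paper via the one-dimensional kernel of the generator $J^l_{\vec x}$, which is the same fact stated infinitesimally), and fix the remaining freedom up to $\pm 1$ by the normalization $\|Y^l(\vec x)\|=1$.
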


\begin{proof}
For all $\vec x \in \mathbb{R}^3$ there is a family of rotation that has no effect on $\vec x$ i.e. $R \vec x = \vec x$. Using the equivariance of the spherical harmonics it imply
\begin{equation}
    Y^l(\vec x) = D^l(R) Y^l(\vec x) \quad \forall R: R\vec x = x
\end{equation}
If we consider an infinitesimal rotation $D^l(R) = 1 + \epsilon J_{\vec x}^l$ we obtain the equation
\begin{equation} \label{eq:sh_sys}
    0 = J_{\vec x}^l Y^l(\vec x)
\end{equation}
where $J_{\vec x}^l$ is the infinitesimal rotation around $\vec x$.
The eigenvalues of the generators of rotations are $-il, \dots, -i, 0, i, \dots, il$, see proof in \cite{rao} page 157.
Therefore the space of solution of \ref{eq:sh_sys} is dimension 1 and $Y^l(\vec x)$ is unique up to a multiplicative constant.
Moreover any pair of vectors on the sphere can be related by a rotation and therefore $Y^l$ is unique up to a multiplicative constant.
Finally using that they have to be normalized $\|Y^l(\vec x)\|=1$ it fixes the multiplicative constant and we are only left with a choice of sign for each $l$.
\end{proof}

\begin{lemma}
When extended from the unit sphere to the full space $\mathbb{R}^3$, the spherical harmonics can be chosen to be polynomials of $\vec x$.
\end{lemma}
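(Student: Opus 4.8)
The plan is to avoid analyzing the extension of $Y^l$ off the sphere directly, and instead to exhibit, for each $l$, an explicit vector of homogeneous polynomials of degree $l$ that transforms under $SO(3)$ exactly like $Y^l$, and then to invoke the preceding uniqueness lemma to identify the two. The natural candidate extension is the homogeneous one, $\tilde Y^l(\vec x) = \|\vec x\|^l Y^l(\vec x/\|\vec x\|)$, which agrees with $Y^l$ on the unit sphere and, because $\|R\vec x\| = \|\vec x\|$ and $R$ commutes with the normalization $\vec x \mapsto \vec x/\|\vec x\|$, inherits the equivariance $\tilde Y^l(R\vec x) = D^l(R)\tilde Y^l(\vec x)$ on all of $\mathbb{R}^3 \setminus \{0\}$. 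What is not yet visible from this formula is that $\tilde Y^l$ is a polynomial; this is what the representation-theoretic input will supply.

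First I would introduce the space $\mathcal{H}_l$ of harmonic homogeneous polynomials of degree $l$ in the three Cartesian coordinates, i.e. homogeneous degree-$l$ polynomials $p$ with $\Delta p = 0$. Since the Laplacian commutes with rotations, the action $(\rho(R)p)(\vec x) = p(R^{-1}\vec x)$ preserves $\mathcal{H}_l$. A dimension count using that $\Delta$ maps the homogeneous degree-$l$ polynomials onto the homogeneous degree-$(l-2)$ polynomials shows $\dim \mathcal{H}_l = 2l+1$, and the classical fact (cf. \cite{rao}) is that this representation is irreducible, hence isomorphic to $D^l$; a concrete witness of the extreme weight is the harmonic polynomial $(x+iy)^l$. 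Equivalently, and more in the spirit of the tensor-product viewpoint of this paper, one may obtain such a polynomial as the symmetric traceless part of $\vec x^{\otimes l}$, which is nonzero and carries the $D^l$ representation.

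Next, because $\mathcal{H}_l \cong D^l$ and the irreducible representations of $SO(3)$ can be taken orthogonal, I would fix a basis $P^l_{-l}, \dots, P^l_{l}$ of $\mathcal{H}_l$ in which $\rho$ is realized by the very matrices $D^l$ of Section~\ref{sec:repr}. With $D^l$ orthogonal, this basis choice yields precisely $P^l_m(R\vec x) = \sum_{n} D^l(R)_{mn} P^l_n(\vec x)$, the same equivariance relation defining $Y^l$. The norm $\|P^l(\vec x)\|$ is then rotation-invariant, hence constant on the unit sphere (any two points there are related by a rotation and $D^l(R)$ preserves norms), and this constant is nonzero since $P^l$ is a nonzero homogeneous map and cannot vanish on all of the sphere; rescaling makes $\|P^l(\vec x)\| = 1$ there without disturbing the equivariance.

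Finally I would close the argument with the uniqueness lemma. The restriction of the normalized $P^l$ to the unit sphere is an equivariant, norm-one family transforming under $D^l$ in the fixed basis, exactly the data for which the previous lemma guarantees uniqueness up to an overall sign. Hence $Y^l = \pm\, P^l|_{\|\vec x\|=1}$, and choosing the sign appropriately, $\tilde Y^l = P^l$ by homogeneity, so the spherical harmonics are the restrictions of the polynomials $P^l$ and are polynomial on all of $\mathbb{R}^3$. The main obstacle is the representation-theoretic step: establishing that $D^l$ actually occurs among homogeneous degree-$l$ polynomials, i.e. that $\mathcal{H}_l$ is irreducible of type $D^l$. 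The dimension count is elementary, but irreducibility and matching the abstract isomorphism to the concrete \texttt{e3nn} basis (together with the orthogonality of $D^l$ used for the norm and the index bookkeeping) are the parts requiring care or a citation; once a polynomial of the correct transformation type is in hand, uniqueness does the rest essentially for free.
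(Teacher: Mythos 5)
Your proof is correct, and it closes exactly where the paper's does --- by producing an explicit equivariant, normalized polynomial family and invoking the preceding uniqueness lemma --- but the way you produce the candidate polynomials is genuinely different. The paper builds them by iterated tensor products: taking $\vec x \otimes \vec x \otimes \cdots$ and extracting the highest-$l$ irrep at each degree, then using the counting identity $(l+2)(l+1)/2 = (2l+1) + (2l-3) + (2l-7) + \cdots$ to confirm that every homogeneous polynomial of degree $l$ is accounted for by terms of the form $\|\vec x\|^{2k} Y^{l-2k}(\vec x)$. You instead realize $D^l$ inside the space $\mathcal{H}_l$ of harmonic homogeneous polynomials, using the dimension count $\dim\mathcal{H}_l = 2l+1$ and the classical irreducibility of that space. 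The trade-off: the paper's route is self-contained within the machinery it is already developing (Section~\ref{sec:tp}), matches how the library actually constructs $Y^{l+1} \propto C^{l+1,l,1} Y^l Y^1$, and as a by-product exhibits the full decomposition of polynomial space into $\|\vec x\|^{2k} Y^{l-2k}$ blocks; your route avoids any dependence on the tensor-product section but must import the irreducibility of $\mathcal{H}_l$ (and the surjectivity of $\Delta$ onto lower-degree polynomials) from the literature, as you correctly flag. One small point worth tightening in your write-up: the identification of the abstract $\mathcal{H}_l \cong D^l$ with the concrete real basis of Section~\ref{sec:repr} is where the sign/basis conventions live, and your appeal to orthogonality of $D^l$ there is doing real work --- it is what lets you pass between $p(R^{-1}\vec x)$ and the relation $P^l_m(R\vec x) = \sum_n D^l(R)_{mn} P^l_n(\vec x)$; but since the uniqueness lemma absorbs any remaining basis ambiguity into the sign choice, this does not affect correctness.
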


\begin{proof}
This proof require to be familiar with the tensor product introduced in Section~\ref{sec:tp}.
As you can see in Figure~\ref{fig:spherical_harmonics}, the successive tensor product of $\vec x$ with itself produce a row of polynomials that are yet to prove the spherical harmonics and rows of the same polynomials multiplied by $\|\vec x\|^{2n}$.
Since these polynomials satisfies the requirements to be spherical harmonics and the spherical harmonics are unique (up to a sign), we conclude that the spherical harmonics are polynomials on $\mathbb{R}^3$ constrained to be evaluated on the sphere.
As a sanity check, to see that all the other polynomials are zero due to symmetry (like for instance $\vec x \wedge \vec x = \vec 0$) we can make a counting argument. 
There is $(l+2)(l+1)/2$ polynomials of $\vec x$ of degree $l$ (choose 2 among $l+2$) and it match with the $(2l+1) + (2l-3) + (2l-7) + \dots$ polynomials of the form $\| \vec x \|^{2k} Y^{l-2k}(\vec x)$. 
\end{proof}

\begin{figure}[ht]
    \centering
    \def\svgwidth{11cm}
    \import{figures/}{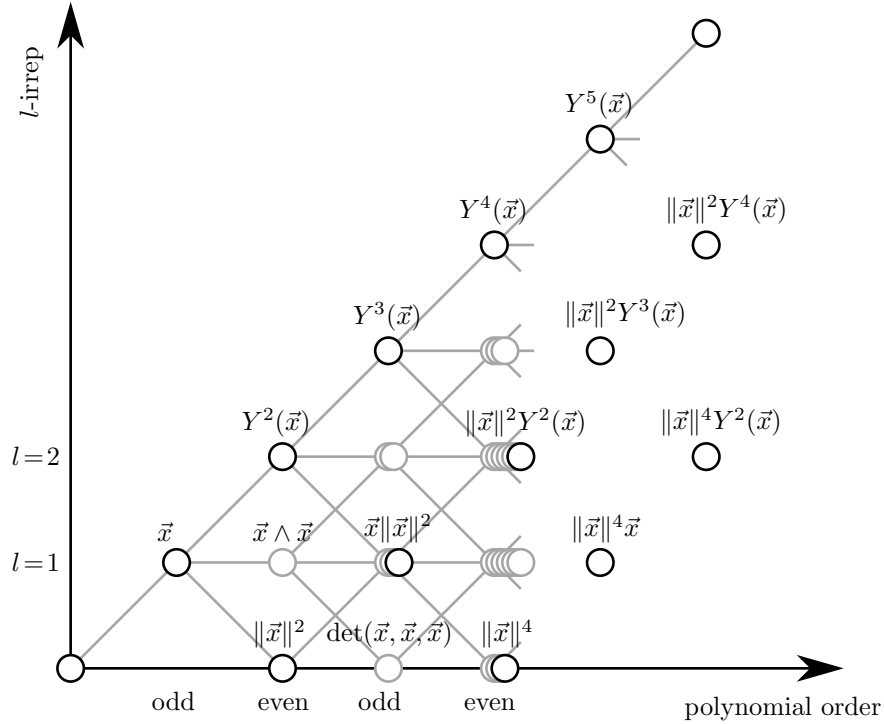}
    \caption{Schematic representation of the polynomials obtained by applying successive tensor product with a vector $\vec x$.
    The spherical harmonics correspond to the highest irrep order of each polynomial order. 
    In gray one has the polynomials that would have been non-zero if the tensor product was taken with different arguments, for instance $\vec x\wedge \vec y$ is non-zero for arbitrary $\vec x$ and $\vec y$ but it is zero when considering $\vec x\wedge\vec x$.}
    \label{fig:spherical_harmonics}
\end{figure}

In \texttt{e3nn}, the choice of sign is made as follow $Y^{l+1}(\vec x) \propto C^{l+1, l, 1} Y^l(\vec x) Y^1(\vec x)$ where $C^{l+1, l, 1}$ are the Clebsh-Gordan coefficient introduced in section \ref{sec:clebsh}.
The spherical harmonics obtained this way matches the usual ones \cite{Rehfeld,sh_wikipedia_2021}.

\begin{remark}
The spherical harmonics are a basis for all the equivariant polynomials on the sphere (only on the sphere because we discarded the $\|\vec x\|^{2n} Y^l(\vec x)$ polynomials).
They also have the property that the frequency (oscillations of the function) increase with increasing $l$.
These two properties makes them a suited basis to represent smooth functions on the sphere.
\end{remark}

Spherical harmonics have many more interesting properties that we do not cover here.
One of them is that (with the appropriate normalization called \texttt{integral} in the code \texttt{e3nn}) they are orthogonal with respect to the following scalar product
\begin{equation}
    \langle f, g \rangle = \int_{s^2} f(\vec x) g(\vec x) d\vec x
\end{equation}

\subsection{Project to and from the sphere} \label{sec:s2}

Scalar functions on the sphere $f: s^2 \to \mathbb{R}$ can be expressed in the basis of the spherical harmonics
\begin{equation}
    f(\vec x) = \sum_{l=0}^\infty v^l \cdot Y^l(\vec x)
\end{equation}
where $v^l \in \mathbb{R}^{2l+1}$ are the coefficients of the function $f$ and $Y^l : \mathbb{R}^3 \to \mathbb{R}^{2l+1}$ are the spherical harmonics.
It's easy to see that under the action of rotation, the coefficients transforms as irreps.
\begin{proof}
\begin{align}
    [R f](\vec x) &= f(R^{-1} \vec x) && \text{(action of rotation on $f$)} \\
            &= \sum_{l=0}^\infty v^l \cdot Y^l(R^{-1} \vec x) \\
            &= \sum_{l=0}^\infty v^l \cdot (D^l(R^{-1}) Y^l(\vec x)) \\
            &= \sum_{l=0}^\infty (D^l(R) v^l) \cdot Y^l(\vec x)
\end{align}
\end{proof}
Truncating the coefficients to a maximum $L$ cuts off the high frequencies of the signal.
\texttt{e3nn} contains efficient functions to convert back and forth between $\{v^l\}_{l=0}^L$ and a discretized version of $f$ evaluated on a grid.
The grid we use is chosen to improve the performance of the transformation\cite{DRISCOLL1994202}.
The grid is made of necklaces around the $y$ axis, see Figure~\ref{fig:s2grid}.

\begin{figure}[ht]
    \centering
    \includegraphics[width=0.6\linewidth]{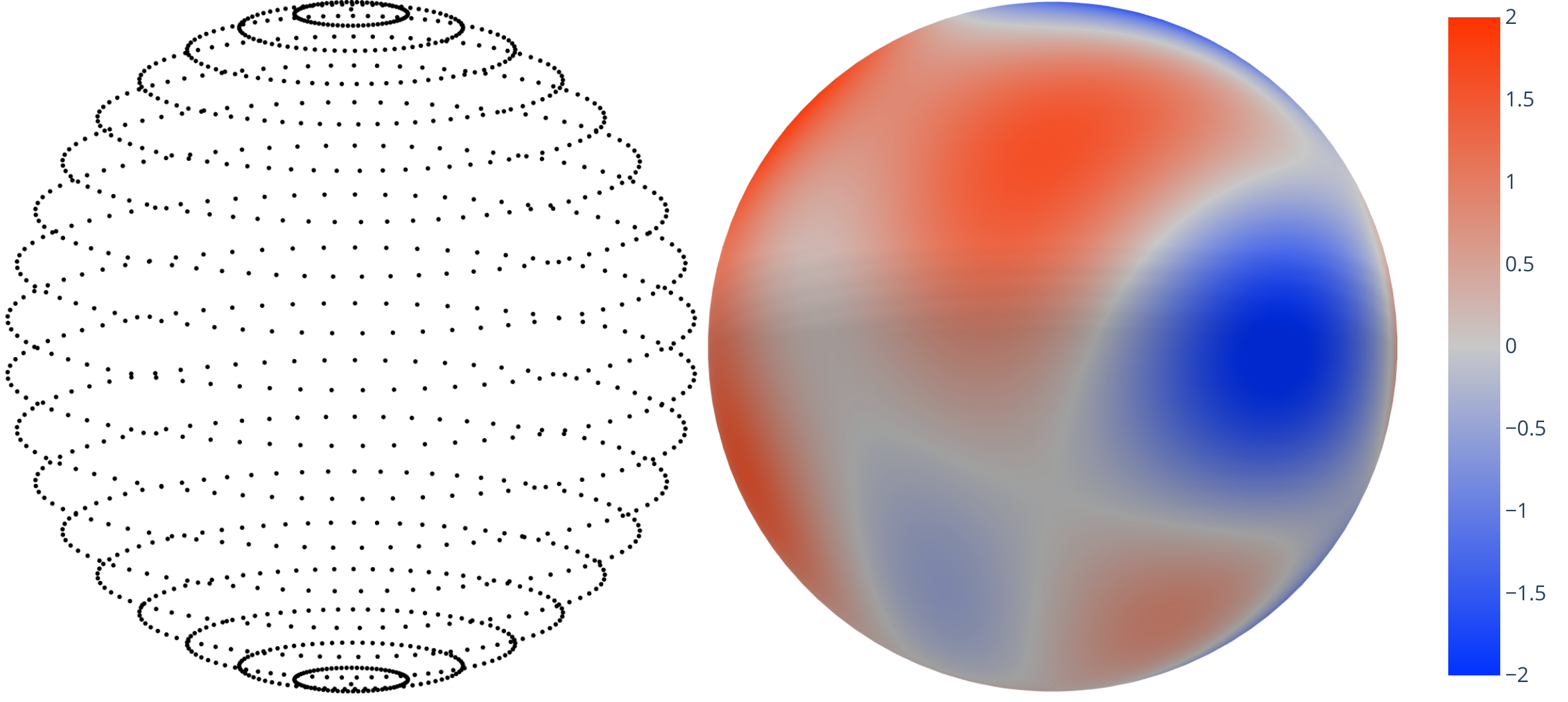}
    \caption{\textbf{Left} Grid on the sphere used in \texttt{e3nn}. \textbf{Right} Random signal on the sphere (cutoff at $L=5$).}
    \label{fig:s2grid}
\end{figure}

% Irreps can represent any data in 3D space. We can choose to interpret those irreps as describing the coefficients of a function on the sphere in the basis of spherical harmonics that match the convention of our irreps.

% A natural basis for functions over Euclidean space centered at a specific point (relevant for convolutional operations) is to separate radial and angular degrees of freedom. In 3D space, the most natural angular basis are the spherical harmonics (for scalar functions) and their higher rank tensor equivalents. For example, vector spherical harmonics are commonly used when dealing with electromagnetism. Any signal on the sphere $S^2$ can be expressed as a linear combination of tensor spherical harmonics.

\section{Tensor Product} \label{sec:tp}
The tensor product is the equivariant multiplication operation for two representations. 
It is denoted as
\begin{equation}
    x \otimes y.
\end{equation}
Depending on the ``type" of $x$ and $y$ this operations takes different forms. In all cases, it satisfies the two conditions: (i) it is bilinear (ii) and equivariant. For instance if $x$ and $y$ are two scalars it will simply be the product $xy$ (up to a multiplicative constant). If $x$ is a scalar and $y$ is a vector, $x\otimes y$ will be the vector made of the product of $x$ with $y$. Note that the ``type" of $x\otimes y$ depends on the type of $x$ and $y$. In the last example, if $x$ is a pseudoscalar and $y$ is a vector, $x\otimes y$ is a pseudovector.
In the case of two vectors, $x\otimes y$ is an element of $\mathbb{R}^9$ ($x\otimes y \in \mathbb{R}^{\dim(x)\dim(y)}$). Via a change of basis $x\otimes y$ can be decomposed into a scalar (corresponding to the scalar product), a pseudovector (the cross product) and 5 component transforming according to the even irrep of $l=2$.
These branching operations are independently bilinear and equivariant.
When computing the tensor product of representations of higher $l$ we obtain more independent operations. We call these independent operations \textbf{paths}. The formulas
\begin{equation} \label{eq:l1l2l3}
    \left\{
    \begin{array}{l}
        |l_1 - l_2| \leq l_3 \leq l_1 + l_2 \\
        p_1 p_2 = p_3
    \end{array}
    \right.
\end{equation}
gives all the paths we get when computing the tensor product of an irrep $(l_1,p_1)$ with an irrep $(l_2,p_2)$, there is one path for each allowed value of $l_3$ ($l_1 + l_2 - |l_1 - l_2| + 1$ in total).
In \texttt{e3nn} we relax the definition of tensor product as any bilinear, equivariant operation that performs on irreps and outputs irreps. 
The paths (constrained by \eqref{eq:l1l2l3}) are the building blocks of our tensor product operations.
We allow for linear combinations of those paths mixing different inputs together.
Let's consider an example. If one has two vectors $(\vec x_1, \vec x_2)$ multiplied by a scalar and a vector $(y_1, \vec y_2)$ (here we add arrows in the notation in order to remember the type but we usually don't do it) we can for instance consider the following operation:
\begin{equation} \label{eq:tp_example}
    (x, y) \mapsto z = \left(
    \begin{array}{ccc}
        w_1 \vec x_1 \cdot \vec y_2 &+& w_2 \vec x_2 \cdot \vec y_2 \\
        w_3 \vec x_1 y_1 &+& w_4 \vec x_2 y_1
    \end{array}
    \right)
\end{equation}
One can check that this operation is bilinear and equivariant. It is also parametrized by the weights $w_1, w_2, w_3, w_4$.
\begin{figure}[ht]
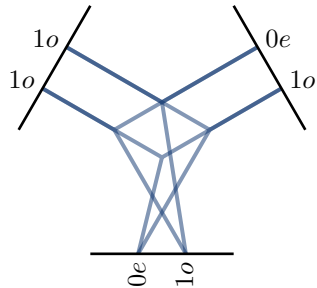

    \centering
    %% Creator: Matplotlib, PGF backend
%%
%% To include the figure in your LaTeX document, write
%%   \input{<filename>.pgf}
%%
%% Make sure the required packages are loaded in your preamble
%%   \usepackage{pgf}
%%
%% and, on pdftex
%%   \usepackage[utf8]{inputenc}\DeclareUnicodeCharacter{2212}{-}
%%
%% or, on luatex and xetex
%%   \usepackage{unicode-math}
%%
%% Figures using additional raster images can only be included by \input if
%% they are in the same directory as the main LaTeX file. For loading figures
%% from other directories you can use the `import` package
%%   \usepackage{import}
%%
%% and then include the figures with
%%   \import{<path to file>}{<filename>.pgf}
%%
%% Matplotlib used the following preamble
%%   \usepackage{fontspec}
%%   \setmainfont{DejaVuSerif.ttf}[Path=/home/mario/anaconda3/lib/python3.8/site-packages/matplotlib/mpl-data/fonts/ttf/]
%%   \setsansfont{DejaVuSans.ttf}[Path=/home/mario/anaconda3/lib/python3.8/site-packages/matplotlib/mpl-data/fonts/ttf/]
%%   \setmonofont{DejaVuSansMono.ttf}[Path=/home/mario/anaconda3/lib/python3.8/site-packages/matplotlib/mpl-data/fonts/ttf/]
%%
\begingroup%
\makeatletter%
\begin{pgfpicture}%
\pgfpathrectangle{\pgfpointorigin}{\pgfqpoint{2.000000in}{2.000000in}}%
\pgfusepath{use as bounding box, clip}%
\begin{pgfscope}%
\pgfsetbuttcap%
\pgfsetmiterjoin%
\pgfsetlinewidth{0.000000pt}%
\definecolor{currentstroke}{rgb}{1.000000,1.000000,1.000000}%
\pgfsetstrokecolor{currentstroke}%
\pgfsetstrokeopacity{0.000000}%
\pgfsetdash{}{0pt}%
\pgfpathmoveto{\pgfqpoint{0.000000in}{0.000000in}}%
\pgfpathlineto{\pgfqpoint{2.000000in}{0.000000in}}%
\pgfpathlineto{\pgfqpoint{2.000000in}{2.000000in}}%
\pgfpathlineto{\pgfqpoint{0.000000in}{2.000000in}}%
\pgfpathclose%
\pgfusepath{}%
\end{pgfscope}%
\begin{pgfscope}%
\pgfpathrectangle{\pgfqpoint{0.198490in}{0.242174in}}{\pgfqpoint{1.603020in}{1.607826in}}%
\pgfusepath{clip}%
\pgfsetbuttcap%
\pgfsetmiterjoin%
\pgfsetlinewidth{1.505625pt}%
\definecolor{currentstroke}{rgb}{0.031373,0.188235,0.419608}%
\pgfsetstrokecolor{currentstroke}%
\pgfsetstrokeopacity{0.500000}%
\pgfsetdash{}{0pt}%
\pgfpathmoveto{\pgfqpoint{1.125032in}{0.396404in}}%
\pgfpathlineto{\pgfqpoint{0.749937in}{1.046087in}}%
\pgfpathmoveto{\pgfqpoint{0.374842in}{1.262648in}}%
\pgfpathlineto{\pgfqpoint{0.749937in}{1.046087in}}%
\pgfpathmoveto{\pgfqpoint{1.500126in}{1.479209in}}%
\pgfpathlineto{\pgfqpoint{0.749937in}{1.046087in}}%
\pgfusepath{stroke}%
\end{pgfscope}%
\begin{pgfscope}%
\pgfpathrectangle{\pgfqpoint{0.198490in}{0.242174in}}{\pgfqpoint{1.603020in}{1.607826in}}%
\pgfusepath{clip}%
\pgfsetbuttcap%
\pgfsetmiterjoin%
\pgfsetlinewidth{1.505625pt}%
\definecolor{currentstroke}{rgb}{0.031373,0.188235,0.419608}%
\pgfsetstrokecolor{currentstroke}%
\pgfsetstrokeopacity{0.500000}%
\pgfsetdash{}{0pt}%
\pgfpathmoveto{\pgfqpoint{0.874968in}{0.396404in}}%
\pgfpathlineto{\pgfqpoint{1.000000in}{0.901713in}}%
\pgfpathmoveto{\pgfqpoint{0.374842in}{1.262648in}}%
\pgfpathlineto{\pgfqpoint{1.000000in}{0.901713in}}%
\pgfpathmoveto{\pgfqpoint{1.625158in}{1.262648in}}%
\pgfpathlineto{\pgfqpoint{1.000000in}{0.901713in}}%
\pgfusepath{stroke}%
\end{pgfscope}%
\begin{pgfscope}%
\pgfpathrectangle{\pgfqpoint{0.198490in}{0.242174in}}{\pgfqpoint{1.603020in}{1.607826in}}%
\pgfusepath{clip}%
\pgfsetbuttcap%
\pgfsetmiterjoin%
\pgfsetlinewidth{1.505625pt}%
\definecolor{currentstroke}{rgb}{0.031373,0.188235,0.419608}%
\pgfsetstrokecolor{currentstroke}%
\pgfsetstrokeopacity{0.500000}%
\pgfsetdash{}{0pt}%
\pgfpathmoveto{\pgfqpoint{1.125032in}{0.396404in}}%
\pgfpathlineto{\pgfqpoint{1.000000in}{1.190461in}}%
\pgfpathmoveto{\pgfqpoint{0.499874in}{1.479209in}}%
\pgfpathlineto{\pgfqpoint{1.000000in}{1.190461in}}%
\pgfpathmoveto{\pgfqpoint{1.500126in}{1.479209in}}%
\pgfpathlineto{\pgfqpoint{1.000000in}{1.190461in}}%
\pgfusepath{stroke}%
\end{pgfscope}%
\begin{pgfscope}%
\pgfpathrectangle{\pgfqpoint{0.198490in}{0.242174in}}{\pgfqpoint{1.603020in}{1.607826in}}%
\pgfusepath{clip}%
\pgfsetbuttcap%
\pgfsetmiterjoin%
\pgfsetlinewidth{1.505625pt}%
\definecolor{currentstroke}{rgb}{0.031373,0.188235,0.419608}%
\pgfsetstrokecolor{currentstroke}%
\pgfsetstrokeopacity{0.500000}%
\pgfsetdash{}{0pt}%
\pgfpathmoveto{\pgfqpoint{0.874968in}{0.396404in}}%
\pgfpathlineto{\pgfqpoint{1.250063in}{1.046087in}}%
\pgfpathmoveto{\pgfqpoint{0.499874in}{1.479209in}}%
\pgfpathlineto{\pgfqpoint{1.250063in}{1.046087in}}%
\pgfpathmoveto{\pgfqpoint{1.625158in}{1.262648in}}%
\pgfpathlineto{\pgfqpoint{1.250063in}{1.046087in}}%
\pgfusepath{stroke}%
\end{pgfscope}%
\begin{pgfscope}%
\pgfpathrectangle{\pgfqpoint{0.198490in}{0.242174in}}{\pgfqpoint{1.603020in}{1.607826in}}%
\pgfusepath{clip}%
\pgfsetbuttcap%
\pgfsetmiterjoin%
\pgfsetlinewidth{1.003750pt}%
\definecolor{currentstroke}{rgb}{0.000000,0.000000,0.000000}%
\pgfsetstrokecolor{currentstroke}%
\pgfsetdash{}{0pt}%
\pgfpathmoveto{\pgfqpoint{1.750190in}{1.046087in}}%
\pgfpathlineto{\pgfqpoint{1.375095in}{1.695770in}}%
\pgfpathmoveto{\pgfqpoint{0.624905in}{1.695770in}}%
\pgfpathlineto{\pgfqpoint{0.249810in}{1.046087in}}%
\pgfpathmoveto{\pgfqpoint{0.624905in}{0.396404in}}%
\pgfpathlineto{\pgfqpoint{1.375095in}{0.396404in}}%
\pgfusepath{stroke}%
\end{pgfscope}%
\begin{pgfscope}%
\definecolor{textcolor}{rgb}{0.000000,0.000000,0.000000}%
\pgfsetstrokecolor{textcolor}%
\pgfsetfillcolor{textcolor}%
\pgftext[x=0.333175in,y=1.262648in,right,base]{\color{textcolor}\sffamily\fontsize{10.000000}{12.000000}\selectfont \(\displaystyle 1o\)}%
\end{pgfscope}%
\begin{pgfscope}%
\definecolor{textcolor}{rgb}{0.000000,0.000000,0.000000}%
\pgfsetstrokecolor{textcolor}%
\pgfsetfillcolor{textcolor}%
\pgftext[x=0.458207in,y=1.479209in,right,base]{\color{textcolor}\sffamily\fontsize{10.000000}{12.000000}\selectfont \(\displaystyle 1o\)}%
\end{pgfscope}%
\begin{pgfscope}%
\definecolor{textcolor}{rgb}{0.000000,0.000000,0.000000}%
\pgfsetstrokecolor{textcolor}%
\pgfsetfillcolor{textcolor}%
\pgftext[x=1.541793in,y=1.479209in,left,base]{\color{textcolor}\sffamily\fontsize{10.000000}{12.000000}\selectfont \(\displaystyle 0e\)}%
\end{pgfscope}%
\begin{pgfscope}%
\definecolor{textcolor}{rgb}{0.000000,0.000000,0.000000}%
\pgfsetstrokecolor{textcolor}%
\pgfsetfillcolor{textcolor}%
\pgftext[x=1.666825in,y=1.262648in,left,base]{\color{textcolor}\sffamily\fontsize{10.000000}{12.000000}\selectfont \(\displaystyle 1o\)}%
\end{pgfscope}%
\begin{pgfscope}%
\definecolor{textcolor}{rgb}{0.000000,0.000000,0.000000}%
\pgfsetstrokecolor{textcolor}%
\pgfsetfillcolor{textcolor}%
\pgftext[x=0.913285in, y=0.220622in, left, base,rotate=90.000000]{\color{textcolor}\sffamily\fontsize{10.000000}{12.000000}\selectfont \(\displaystyle 0e\)}%
\end{pgfscope}%
\begin{pgfscope}%
\definecolor{textcolor}{rgb}{0.000000,0.000000,0.000000}%
\pgfsetstrokecolor{textcolor}%
\pgfsetfillcolor{textcolor}%
\pgftext[x=1.163348in, y=0.217970in, left, base,rotate=90.000000]{\color{textcolor}\sffamily\fontsize{10.000000}{12.000000}\selectfont \(\displaystyle 1o\)}%
\end{pgfscope}%
\end{pgfpicture}%
\makeatother%
\endgroup%
    \caption{Top left represent the first inputs ($x$ in \eqref{eq:tp_example}). Top right represent the second input ($y$) and the bottom part represent the output ($z$). The blue connections represent the paths: there is two paths going into each output.}
    \label{fig:tp_example}
\end{figure}

\noindent We will call this operation a \textit{tensor product} although it is not the $\otimes$ operation.
This operations contains 4 paths: (i) from $x_1$ times $y_2$ to the first output $z_1$ of irreps $l=1$ times $l=1$ into $l=0$ (a scalar product), (ii) $x_2 \times y_2 \to z_1$ of type $1 \times 1 \to 0$, (iii) $x_1 \times y_1 \to z_2$ of type $1 \times 0 \to 1$ and (iv) $x_2 \times y_1 \to z_2$ of type $1 \times 0 \to 1$.
These paths are represented schematically in Figure~\ref{fig:tp_example}.
In \texttt{e3nn} this operation can be constructed as follow:
\begin{lstlisting}[language=Python]
import torch
from e3nn import o3

irreps_in1 = o3.Irreps('1o + 1o')
irreps_in2 = o3.Irreps('0e + 1o')
irreps_out = o3.Irreps('0e + 1o')

tp = o3.FullyConnectedTensorProduct(irreps_in1, irreps_in2, irreps_out)
# FullyConnectedTensorProduct is a subclass of TensorProduct

x1 = torch.tensor([1., 0., 0.,    2., 1., 0.])
x2 = torch.tensor([123.4,         0., 1., 5.])
out = tp(x1, x2)  # evaluate the tensor product
\end{lstlisting}

\texttt{e3nn.o3.FullyConnectedTensorProduct} is a simplified interface to a more general operation called \texttt{e3nn.o3.TensorProduct} who is defined by
\begin{enumerate}
    \item A set of irreps for the first input
    \item A set of irreps for the second input
    \item A set of irreps for the output
    \item A set of paths, each path is defined as
    \begin{enumerate}
        \item Which two inputs are connected to which output (it has to satisfy \eqref{eq:l1l2l3})
        \item It's weight value (it can be a learned parameter)
    \end{enumerate}
\end{enumerate}
We typically initialize the weights as some normalization constant multiplying a learned parameter initialized randomly with a normalized Gaussian. The normalization constant is carefully chosen such that every path contributes equally and that the output amplitude is close to one. See Section~\ref{app:tp} and Section~\ref{sec:equiv_modules}.
%The tensor product is the most general operation for interacting two representations. %The convolution operation itself is decomposed into evaluating an equivariant filter and then interacting that filter with input data.

\subsection{Clebsch–Gordan coefficients} \label{sec:clebsh}
The Clebsch–Gordan coefficients are the change of basis that decompose the tensor product $\otimes$ into irreps.
The Clebsch–Gordan coefficients (also called Wigner 3j symbols) satisfy the following equation
\begin{equation}
    C^{l_1, l_2, l_3}_{lmn} = C^{l_1, l_2, l_3}_{ijk} D^{l_1}_{il}(R) D^{l_2}_{jm}(R) D^{l_3}_{kn}(R) \qquad \forall R \in SO(3)
\end{equation}

In the literature, they are usually computed for the complex basis that diagonalize one of the generator (see discussion in Section~\ref{sec:repr}). We get the Clebsch–Gordan coefficients from QuTip\cite{qutip} and we apply the change of basis \ref{eq:real_complex} to convert them in our real basis.

\subsection{Different Paths in the Tensor Product} \label{app:tp}
Each output of an \texttt{e3nn} tensor product is a sum of \textit{paths}.
\begin{equation}
    \text{out} = \sqrt{\frac{1}{\text{number of paths}}} \sum_{i\in\text{paths}} \text{value of path $i$}
\end{equation}
where the multiplicative constant ensure that the components of the output have variance 1.
Each path can then be written as
\begin{align} \label{eq:tp_path}
    (\text{value of path})_{wk} & = \nonumber\\ 
    & \frac{1}{\sqrt{m_1 m_2}} \sum_{u=1}^{m_1} \sum_{v=1}^{m_2} \sum_{i=1}^{2 l_1 + 1} \sum_{j=1}^{2 l_2 + 1} \nonumber\\
    & w_{uvw} C_{ijk} (\text{in1})_{ui} (\text{in2})_{vj}
\end{align}
where $m_1$ (resp. $m_2$) is the multiplicity of the first (resp. second) input.
$l_1$ (resp. $l_2$) is the representation order of the first (resp. second) input.
$w_{uvw}$ are the weights, they can be learned parameters. $C_{ijk}$ are the Clebsch-Gordan coefficients i.e. the orthogonal change of variable between the natural tensor product $\otimes$ basis and the irrep of the output.

We implement also more sparse version of the weight connections. Equation \eqref{eq:tp_path} corresponds to what we call \texttt{uvw} weight connection mode. While for instance another type of connection mode that is provided is the \texttt{uvu} connection mode implemented as follow
\begin{align}
    (\text{value of path})_{uk} & = \nonumber\\ 
    & \frac{1}{\sqrt{m_2}} \sum_{v=1}^{m_2} \sum_{i=1}^{2 l_1 + 1} \sum_{j=1}^{2 l_2 + 1} \nonumber\\
    & w_{uv} C_{ijk} (\text{in1})_{ui} (\text{in2})_{vj}.
\end{align}
Note that $\sqrt{m_1 m_2}$ has been replaced by $\sqrt{m_2}$ because we don't sum over $u$ anymore. Other modes like \texttt{uuu} and \texttt{uvuv} are described in the library.

\subsection{Generality of Tensor Product} \label{sec:tp_general}

\begin{lemma}
The \texttt{TensorProduct} class of \texttt{e3nn} can represent any bi-linear equivariant operations combining two set of irreps into irreps. 
\end{lemma}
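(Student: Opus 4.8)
The plan is to reduce the statement to a standard representation-theoretic count governed by Schur's lemma. First I would note that a bilinear map $B:X_1\times X_2\to Y$ carries exactly the same data as a single linear map $\hat B: X_1\otimes X_2\to Y$ on the honest tensor-product vector space, and that $B$ is equivariant precisely when $\hat B$ intertwines the tensor-product representation $g\mapsto D_{X_1}(g)\otimes D_{X_2}(g)$ with $D_Y$. Thus the space of all bilinear equivariant operations is the intertwiner space $\mathrm{Hom}_G(X_1\otimes X_2, Y)$, and it suffices to show that the \texttt{TensorProduct} parametrization (in particular the \texttt{uvw} weight mode of \eqref{eq:tp_path}) surjects onto this space as the weights range freely.

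Next I would decompose everything into irreps. Writing each input and the output as $\bigoplus_\tau M^\tau\otimes V_\tau$, where $\tau=(l,p)$ runs over irrep types, $V_\tau$ is the corresponding $O(3)$ irrep, and $M^\tau=\mathbb{R}^{m^\tau}$ is its multiplicity space, the tensor product splits as $X_1\otimes X_2=\bigoplus_{\tau_1,\tau_2}(M_1^{\tau_1}\otimes M_2^{\tau_2})\otimes(V_{\tau_1}\otimes V_{\tau_2})$. The Clebsch--Gordan series of Section~\ref{sec:clebsh} gives $V_{\tau_1}\otimes V_{\tau_2}\cong\bigoplus_{\tau_3}V_{\tau_3}$, where $\tau_3=(l_3,p_3)$ ranges exactly over the pairs allowed by the selection rules \eqref{eq:l1l2l3}, each with multiplicity one; the $l$-part is the classical triangle rule and the parity part is simply $p_3=p_1p_2$. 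The coefficients $C_{ijk}$ then furnish an explicit basis of each one-dimensional space $\mathrm{Hom}_G(V_{\tau_1}\otimes V_{\tau_2}, V_{\tau_3})$.

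I would then assemble the count. Because the real irreps of $O(3)$ are absolutely irreducible (the reality statement of Section~\ref{sec:repr}), Schur's lemma yields $\mathrm{Hom}_G(V_\tau,V_{\tau'})=\mathbb{R}\,\delta_{\tau\tau'}$, so, calling a pair $(\tau_1,\tau_2)$ \emph{admissible} for $\tau_3$ when it satisfies \eqref{eq:l1l2l3},
\begin{equation*}
\mathrm{Hom}_G(X_1\otimes X_2, Y)=\bigoplus_{\tau_3}\ \bigoplus_{(\tau_1,\tau_2)\ \text{admissible}}\mathrm{Hom}\big(M_1^{\tau_1}\otimes M_2^{\tau_2},\,M_Y^{\tau_3}\big).
\end{equation*}
Each summand is a free array of real scalars indexed by $(u,v,w)$ with $u\le m_1^{\tau_1}$, $v\le m_2^{\tau_2}$, $w\le m_Y^{\tau_3}$ --- precisely the weight tensor $w_{uvw}$ attached to the path $(\tau_1,\tau_2)\to\tau_3$ in the \texttt{uvw} mode. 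I would finish by checking that \eqref{eq:tp_path} applied to this path realizes exactly the corresponding basis intertwiner (the contraction of $w_{uvw}$ with $C_{ijk}$), so that letting the weights vary and summing over all admissible paths reproduces an arbitrary element of the Hom space.

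The hard part is the completeness of the Clebsch--Gordan decomposition: that the $C_{ijk}$ span \emph{all} intertwiners out of $V_{l_1}\otimes V_{l_2}$ and that the series is multiplicity-free, so that no equivariant bilinear coupling is missed and none is double-counted. Both are classical for $SO(3)$ and reduce to the triangle rule together with the dimension identity $(2l_1+1)(2l_2+1)=\sum_{l_3}(2l_3+1)$; the extension to $O(3)$ is immediate since $O(3)\cong SO(3)\times\{\pm1\}$ and parities merely multiply. A minor point to confirm is that the normalization factors appearing in \eqref{eq:tp_path} are nonzero, so that they only rescale each basis intertwiner and do not shrink the image, leaving surjectivity intact.
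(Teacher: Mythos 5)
Your proposal is correct and follows essentially the same route as the paper's own (much terser) proof: both reduce bilinear equivariant maps to linear intertwiners on the honest tensor product $X_1\otimes X_2$, invoke the uniqueness/completeness of the irrep (Clebsch--Gordan) decomposition, and observe that the freely weighted paths exhaust the resulting Hom space. Your write-up simply makes explicit the steps --- Schur's lemma for the absolutely irreducible real irreps, multiplicity-freeness of the $SO(3)$ series, and the identification of $w_{uvw}$ with $\mathrm{Hom}\bigl(M_1^{\tau_1}\otimes M_2^{\tau_2}, M_Y^{\tau_3}\bigr)$ --- that the paper leaves implicit.
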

\begin{proof}
The tensor product $x \otimes y$ is the complete bi-linear operation. In the sense that $\dim(x \otimes y) = \dim(x) \dim(y)$ and it is non trivial (not equal to zero). Then once the basis of irreps chosen, the decomposition into irreps is unique. In \texttt{TensorProduct} all the paths can be created and weight therefore all possible operations can be performed.
\end{proof}

\subsection{Reducing Tensor Products into irreps} \label{sec:rtp}
A common question that arises when dealing with physical systems is how to re-express the interaction of multiple objects into its most basic components. For example, how to deduce that a rank 4 symmetric Cartesian tensor only has 21 (rather than $3^4=81$) degrees of freedom. Our \texttt{ReducedTensorProduct} operations is able to readily perform these calculations. % and generalize to other groups and permutation symmetries.

Our \texttt{ReducedTensorProduct} class takes as arguments an index formula which can specify the symmetries under permutation of the indices and the irrep decomposition of each index. Considering the same example, a symmetry rank 4 tensor would be articulated as
\begin{lstlisting}[language=Python]
from e3nn.o3 import ReducedTensorProducts

r = ReducedTensorProducts('ijkl=jikl=ijlk=klij', i='1o')
print(r.irreps_out)
# 2x0e+2x2e+1x4e
\end{lstlisting}

The two symmetries, (i) permutation of indices and (ii) representation of $SO(3)$ commute. Therefore the two problems can be solved independently and merged at the end.

Let $X$ being the vector space of the tensor we want to reduce. The dimension of $X$ is the product of the dimensions of the representations of the indices of the tensor.

Let $\mathcal P$ be the group of indices permutation obtained from the index formula (for example \texttt{ijkl=jikl=klij} germinate a group of 8 permutations).

Both $SO(3)$ and $P$ act on $X$ with the representation $D_X(g), g\in SO(3)$ and $D_X(\tau), \tau \in \mathcal P$ respectively.

And let $D_S: \mathcal P \to \{\pm1\}$ be the representation encoding the formula signs, $D_S(\tau)$ is the sign acting on the tensor when permuting it's indices by $\tau$.

The first step of the algorithm is to find a basis $P\in \mathbb{R}^{p\times \dim(X)}$ of the sub vector space stable to permutations:
\begin{equation}
    D_S(\tau) P = P D_X(\tau) \qquad \tau \in \mathcal P 
\end{equation}

\begin{lstlisting}[language=Python]
# For instance with ij=ji and tensor of shape 2x2
P = [
    [1, 0,  0, 0],
    [0, 0,  0, 1],
    [0, 1,  1, 0],
]
\end{lstlisting}
Then let's only consider the rotations\footnote{or any other group as long as we know how to decompose the tensor product into irreps.} and leave aside the indices permutation constraints. Performing the tensor products and decomposing into irreps in chain leads to one basis $R_i$ per irrep $i$ (one base for the odd scalars, one for the even scalars, another one for the odd vectors and so on).

\begin{equation}
    [1 \otimes D_i(g)] R_i = R_i D_X(g) \qquad g\in SO(3)
\end{equation}
where $D_i$ is the Wigner D matrix of irrep $i$ and $1$ is the identity of dimension equal to the multiplicity of $i$ in $X$.

The sum of the dimensions of the $R_i$ bases is equal to the dimension of the space $X$. While the dimension of the $P$ basis is smaller than the tensor dimension (due to the permutation constraints).

The last step of the procedure consist of, for each irrep $i$, finding linear combinations of the vectors of the $R_i$ basis such that they lie in the span of the basis $P$. The coefficient matrix $W_i$ mixes the multiplicities of $R_i$ to create the bases $Q_i$,
\begin{equation}
    Q_i = [W_i \otimes 1] R_i
\end{equation}
such that they satisfy
\begin{equation}
    \left\{ \begin{array}{ll}
        [1 \otimes D_i(g)] Q_i = Q_i D_X(g) & g\in SO(3) \\
        D_S(\tau) Q_i = Q_i D_X(\tau) & \tau \in \mathcal{P}
    \end{array}
    \right.
\end{equation}

The first row is trivially satisfied because $1\otimes D_i(g)$ commutes with $W_i\otimes 1$.
The $W_i$ satisfying the second row can be found using $P$,
\begin{equation}
    Q_i = [W_i \otimes 1] R_i = \tilde W_i P
\end{equation}

This is done by solving a linear system described in the pseudo code below.
% Because the action of permutation and rotation commute, the solutions found for one value of $i$ are equivalent to the solutions found for the other values of $i$. Once all the solutions $w_b$ are found for all the irrep of the group we obtain a basis that both decomposes into irreps and satisfies the given index formula.
\begin{lstlisting}[language=Python]
ir = "1o"
r = R[ir][:, 0, :]  # multiplicity x dim(X)
P  # p x dim(X)

U = matrix_by_blocks(
   r @ r.T,    -r @ P.T,
  -P @ r.T,     P @ P.T
)
E, V = torch.linalg.eigh(U)
W = V[:, E < 1e-9][:multiplicity].T  # num_solutions x multiplicity
W = Gram_Schmidt(W.T @ W)  # num_solutions x multiplicity

Q[ir] = torch.einsum('ij,jkx->ikx', W, R[ir])
\end{lstlisting}

Note that in the pseudo code the system is solved only for one component of the irrep (the first one in this pseudo code).
In principle one should compute the intersections of all the solutions.
It is not necessary because $Q$ has to be a basis for the same subspace as $P$, therefore all degrees of freedom of $P$ have to be mapped to $Q$.
We claim that $W$ computed with a single component cannot contain more solutions than the intersection of all solutions computed with each component, this is true because otherwise it would imply that some directions in the space span by $P$ map to a fraction of an irrep. This is not possible because the tensor we reduce is a representation and the irrep are irreducible.

The final change of basis $Q$ is made by concatenating the all the $Q_i$ we obtained.

% \begin{lstlisting}[language=Python]
% f = "ijkl=jikl=klij"  # the formula
% I = {
%   'i': '1e', 'j': '1e', 'k': '1e', 'l': '1e'
% }  # irreps of the indices

% # Generate the entire group of transformation available through the permutations provided
% Gp = germinate_subgroup_of_permutations(f)

% # Construct the list of all the components of the tensor
% A = [(0,0,0,0), (0,0,0,1), ... (2,2,2,2)]
% assert len(A) = I[i].dim * I[j].dim * I[k].dim * I[l].dim

% # regroup the components using the symmetries
% B = {{p(a) for p in Gp} for a in A}

% P = change_of_basis(to=B, from=A)  # shape (len(B), len(A))

% # compute ((I[i] x I[j]) x I[k]) x I[l]
% # ((1 x 1) x 1) x 1 = (((0 + 1 + 2) x 1) x 1
% # = (1 + (0 + 1 + 2) + (1 + 2 + 3)) x 1
% # = (0+1+2) + (1 + (0+1+2) + (1+2+3)) + ((0+1+2) + (1+2+3) + (2+3+4))
% # Regroup the irreps together and compute the changes of basis

% Q = []

% for ir in [0, 1, 2, ...]:
%   R = change_of_basis(to=ir, from=A)  # shape (num. of multiplicities, ir.dim, len(A))
  
%   r = R[:, 0, :]
%   U = matrix_by_blocks(
%     r @ r.T,    -r @ P.T,
%     -P @ r.T,     P @ P.T
%   )
%   E, V = torch.linalg.eigh(U)
%   X = V[:, E < 1e-9][:num_mul].T
%   X = X.T @ X
    
%   Q.append(Gram_Schmidt(X))

% # Q is the change of basis into each irrep that satisfies the symmetries of permutations of the indices  
% return concatenate(Q)
% \end{lstlisting}

For the special case of a tensor made out of the product of vectors,
we alternatively keep track of the tensor product operations in order to implement the change of basis by performing those tensor products instead of applying the change of basis with a matrix multiplication.

% \begin{algorithm}[H]
% \SetAlgoLined
% \SetKwInOut{Input}{Input}
% \SetKwInOut{Output}{Output}
% \Input{The formula $f$ (e.g. \texttt{ijkl=jikl=klij}) and irreps of the indices.}
% \Output{A basis satisfying indices symmetries and decompose the tensor into irreps.}

% From $f$ generate the subgroup of permutation $\mathcal P$ that acts as a symmetry on the tensor.\;
% \texttt{A:=} Construct the list of all the components of the tensor: \texttt{0000, 0001, 0002, ... 2222}.\;
% Apply all the permutation of $\mathcal P$ to each components \text

% \caption{\texttt{ReducedTensorProduct} algorithm}
% \end{algorithm}

\section{Equivariant modules} \label{sec:equiv_modules}

Using \texttt{TensorProduct} and spherical harmonics we implemented the point convolution of \cite{tfn}, the voxel convolution of \cites{3dsteerable} and the equivariant transformer of \cite{se3_transformers}.

Here is an example of a module computing an equivariant polynomial $P$ of a set of positions $\{\vec x_i\}_i$. This polynomial is parameterized by weights. For any value of the weights it is equivariant with respect to all transformation of $E(3)$:
\begin{equation}
    \left\{ \begin{array}{l}
        P(\{R \vec x\}) = D(R) P(\{\vec x\}) \\
        P(\{\vec x + \vec a\}) = P(\{\vec x\}) \\
        P(\{-\vec x\}) = D(-1) P(\{\vec x\})
    \end{array}
    \right.
\end{equation}

\begin{lstlisting}[language=Python]
import torch
import e3nn

def scatter(src, index, dim_size):
    out = src.new_zeros(dim_size, src.shape[1])
    index = index.reshape(-1, 1).expand_as(src)
    return out.scatter_add_(0, index, src)

def radius_graph(pos, r_max):
    r = torch.cdist(pos, pos)
    index = ((r < r_max) & (r > 0)).nonzero().T
    return index

class Polynomial(torch.nn.Module):
    def __init__(self, irreps_out):
        super().__init__()
        self.irreps_sh = e3nn.o3.Irreps.spherical_harmonics(3)
        irreps_mid = e3nn.o3.Irreps(
            "64x0e + 24x1e + 24x1o + 16x2e + 16x2o")

        self.tp1 = e3nn.o3.FullyConnectedTensorProduct(
            irreps_in1=self.irreps_sh,
            irreps_in2=self.irreps_sh,
            irreps_out=irreps_mid,
        )
        self.tp2 = e3nn.o3.FullyConnectedTensorProduct(
            irreps_in1=irreps_mid,
            irreps_in2=irreps_mid,
            irreps_out=irreps_out,
        )
        self.irreps_out = self.tp2.irreps_out

    def forward(self, pos, max_radius, num_neigh, num_nodes):
        n = pos.shape[0]  # number of nodes

        # Compute tensors of indices representing the graph
        e_src, e_dst = radius_graph(pos, max_radius)

        e_x = e3nn.o3.spherical_harmonics(
            l=self.irreps_sh,
            x=pos[e_src] - pos[e_dst],
            normalize=False,  # we want polynomials of x
            normalization='component'
        )
        
        n_x = scatter(e_x, e_dst, n) / num_neigh**0.5
        e_x = self.tp1(n_x[e_src], e_x)

        n_x = scatter(e_x, e_dst, n) / num_neigh**0.5
        e_x = self.tp2(n_x[e_src], e_x)

        return e_x.sum(0) / num_neigh**0.5 / num_nodes**0.5
\end{lstlisting}

\subsection{Initialization} \label{sec:init}

In all the modules, we initialize the weights with normalized Gaussian of mean 0 and variance 1. Also we chose to statistically favor a particular normalization of our data. 
We call it \texttt{component} normalization: if $x \in \mathbb{R}^d$ we aim to have $\| x \|^2 = d$, such that in average all component have magnitude $\sim 1$ (we say "all the component are of order 1", note that here the term order has nothing to do with the irreps).

Our initialization scheme satisfies that at initialization (i) all the preactivation have mean 0 and variance 1 (ii) all the post activations have the second moment 1 (iii) all the layers learn when the width (i.e. the multiplicities) is sent to infinity.

(iii) is satisfied because our initialization satisfies the Maximal Update Parametrization \cite{yang2020feature}. We use $\mu$P shifted by $\theta=1/2$ according to Symmetries of abc-Parametrizations in \cite{yang2020feature}. All our parameters are initialized with normal distribution $\mathcal{N}(0, 1)$. The learning rate is set proportional to the number of features. And multiplicative constants are put to satisfy both (i) and abc-Parametrizations. 

For example, for a fully connected layer
\begin{equation}
    z = \phi \left(\frac{1}{\sqrt{d}} W x \right)
\end{equation}
where $z \in \mathbb{R}^n$ is a post activation, $W \in \mathbb{R}^{n\times d}$ are the parameters and $x \in \mathbb{R}^d$ are the inputs or post activations of the previous layer. 
Following our initialization scheme each component of $W$ is initialized with $\mathcal{N}(0, 1)$.
Each component of the matrix-vector product $W x$ is of variance $n$. The multiplicative constant $\frac{1}{\sqrt{n}}$ bring the variance to 1.
$\phi$ is an activation function that if necessary has been rescaled such that $\frac{1}{\sqrt{2 \pi}}\int \phi(x)^2 e^{-x^2/2} dx = 1$.
Thanks to this rescaling, each component of $z$ has a second moment of 1.

\section{Data Efficiency}

\begin{figure}[ht]
    \centering
    \includegraphics[width=6cm]{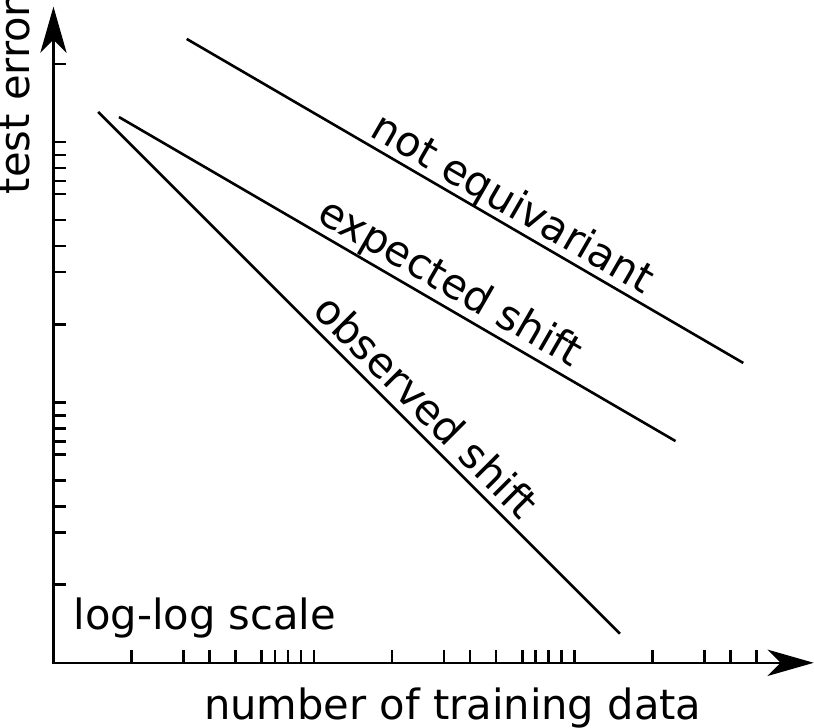}
    \caption{The data-efficiency of equivariant neural network is more than a multiplicative constant, it affects the power law exponent of the learning curve.}
    \label{fig:data_efficiency}
\end{figure}

One could argue that invariance and equivariance allow for better data-efficiency because it remove the need for data-augmentation and then the number of training sample would be divided by the number of augmentation needed to observe all the orientations.
If it was just that, we would observe a shift in the learning curve, because for the same performance we would need $c$ times less training data.

Surprisingly it has been observed\cite{md_batzner2021} that the exponent of the leaning curve (test error as a function of the number of training data) is different for equivariant neural networks.
This new behavior can be visualized as a change of slope in a log-log plot, see Figure~\ref{fig:data_efficiency}.

Unfortunately we have no theoretical explanation for this change of exponent.

\section{Expressing previous works with e3nn}

% Tensor Field Networks \cite{tfn} and 3D Steerable CNNs \cite{3dsteerable} use what we call the FullTensorProduct which is the tensor product that is likely most familiar to folks familiar with the addition of angular momentum. 
The primary difference between how Tensor Field Networks and 3D Steerable CNNs are articulated is that the filters in 3D Steerable CNNs are articulated as the right operation of the tensor product on the spherical harmonic expansion where as in tensor field networks the filter is spherical harmonic expansion. This is due to the fact that 3D Steerable CNNs was developed for voxels and by contracting one of the input indices, the form of the filter is compatible with a standard CNN module.

However, in order to reduce computational overhead when operating on point clouds, Tensor Field Networks does not add weights in the tensor product and performs a linear operation after the tensor product — analogous to SchNet and the idea of matrix decomposition. Implemented in e3nn as connection modes ‘uvu’ vs. ‘uvw’.

Spherical CNNs \cite{s.2018spherical} operate on signals on a sphere they can be articulated in real space or alternatively in irrep space. In irrep space the convolution maps to \texttt{Linear} layers and the activations function require a Fourier transform and reverse transform.

Clebsch-Gordan Networks \cite{kondor2018clebsch} use \texttt{TensorSquare} as a nonlinearity followed by a \texttt{Linear} layer.

% GIVE CODE EXAMPLE OF THESE CONVOLUTIONS (WITH GRAPHIC)

Ref. \cite{se3_transformers} implements an $SE(3)$ equivariant transformer.

All these models can be constructed using e3nn primitives.

%\subsection{Projecting geometry in $\mathbb{R}^3$ to irreps, then onto the sphere $S^2$, and then back to $\mathbb{R}^3$}

% Include code example for projecting the vertices of tetrahedra onto irreps, then onto S2, then get peaks to recover points.

\section{Conclusion}
\texttt{e3nn} is a general framework for articulating composable $E(3)$ equivariant operations for learning on 3D data and beyond. 

By focusing on building expressive general operations, we can efficiently express complex equivariant functions and build sophisticated trainable modules. Our framework does not impose specific training operations but rather gives the user all needed building blocks to compose their own modules, much as how PyTorch or Tensorflow provide a flexible autodifferentiation API of standard matrix operations. Because we have distilled all equivariant operations to our core building blocks, all optimization efforts can be focused on these few classes.

Models built with \texttt{e3nn} using high order representations ($l>1$) are more data-efficient than models limited to scalars and vectors \cite{md_batzner2021}.

\printbibliography

@article{NeuralScaling2022, 
    place={Cambridge}, 
    title={Neural Scaling of Deep Chemical Models}, 
    DOI={10.26434/chemrxiv-2022-3s512}, 
    journal={ChemRxiv}, 
    publisher={Cambridge Open Engage}, 
    author={Frey, Nathan and Soklaski, Ryan and Axelrod, Simon and Samsi, Siddharth and Gomez-Bombarelli, Rafael and Coley, Connor and Gadepally, Vijay}, 
    year={2022}
}

@misc{https://doi.org/10.48550/arxiv.2204.05249,
  doi = {10.48550/ARXIV.2204.05249},
  
  url = {https://arxiv.org/abs/2204.05249},
  
  author = {Musaelian, Albert and Batzner, Simon and Johansson, Anders and Sun, Lixin and Owen, Cameron J. and Kornbluth, Mordechai and Kozinsky, Boris},
  
  keywords = {Computational Physics (physics.comp-ph), Materials Science (cond-mat.mtrl-sci), Machine Learning (cs.LG), Chemical Physics (physics.chem-ph), FOS: Physical sciences, FOS: Physical sciences, FOS: Computer and information sciences, FOS: Computer and information sciences},
  
  title = {Learning Local Equivariant Representations for Large-Scale Atomistic Dynamics},
  
  publisher = {arXiv},
  
  year = {2022},
  
  copyright = {arXiv.org perpetual, non-exclusive license}
}

@misc{https://doi.org/10.48550/arxiv.2201.12176,
  doi = {10.48550/ARXIV.2201.12176},
  
  url = {https://arxiv.org/abs/2201.12176},
  
  author = {Wang, Wujie and Xu, Minkai and Cai, Chen and Miller, Benjamin Kurt and Smidt, Tess and Wang, Yusu and Tang, Jian and Gómez-Bombarelli, Rafael},
  
  keywords = {Machine Learning (cs.LG), Chemical Physics (physics.chem-ph), Computational Physics (physics.comp-ph), FOS: Computer and information sciences, FOS: Computer and information sciences, FOS: Physical sciences, FOS: Physical sciences},
  
  title = {Generative Coarse-Graining of Molecular Conformations},
  
  publisher = {arXiv},
  
  year = {2022},
  
  copyright = {Creative Commons Attribution 4.0 International}
}

@misc{https://doi.org/10.48550/arxiv.2110.02905,
  doi = {10.48550/ARXIV.2110.02905},
  
  url = {https://arxiv.org/abs/2110.02905},
  
  author = {Brandstetter, Johannes and Hesselink, Rob and van der Pol, Elise and Bekkers, Erik J and Welling, Max},
  
  keywords = {Machine Learning (cs.LG), Artificial Intelligence (cs.AI), Machine Learning (stat.ML), FOS: Computer and information sciences, FOS: Computer and information sciences},
  
  title = {Geometric and Physical Quantities Improve E(3) Equivariant Message Passing},
  
  publisher = {arXiv},
  
  year = {2021},
  
  copyright = {arXiv.org perpetual, non-exclusive license}
}

@article{Siddani_2021,
	doi = {10.1063/5.0066049},
  
	url = {https://doi.org/10.1063%2F5.0066049},
  
	year = 2021,
	month = {10},
  
	publisher = {{AIP} Publishing},
  
	volume = {33},
  
	number = {10},
  
	pages = {103323},
  
	author = {B. Siddani and S. Balachandar and R. Fang},
  
	title = {Rotational and reflectional equivariant convolutional neural network for data-limited applications: Multiphase flow demonstration},
  
	journal = {Physics of Fluids}
}

@misc{https://doi.org/10.48550/arxiv.2104.05608,
  doi = {10.48550/ARXIV.2104.05608},
  
  url = {https://arxiv.org/abs/2104.05608},
  
  author = {Cai, Chen and Vlassis, Nikolaos and Magee, Lucas and Ma, Ran and Xiong, Zeyu and Bahmani, Bahador and Wong, Teng-Fong and Wang, Yusu and Sun, WaiChing},
  
  keywords = {Geophysics (physics.geo-ph), Machine Learning (cs.LG), FOS: Physical sciences, FOS: Physical sciences, FOS: Computer and information sciences, FOS: Computer and information sciences},
  
  title = {Equivariant geometric learning for digital rock physics: estimating formation factor and effective permeability tensors from Morse graph},
  
  publisher = {arXiv},
  
  year = {2021},
  
  copyright = {Creative Commons Attribution 4.0 International}
}

@inproceedings{scalar,
  author    = {Villar, Soledad and
                  Hogg, David and
                  Storey-Fisher, Kate and
                  Yao, Weichi and
                  Blum-Smith, Ben
    },
  title     = {Scalars are universal: Equivariant machine learning, structured like classical physics},
  booktitle = {Advances in Neural Information Processing Systems, NeurIPS 2021, December},
  year      = {2021},
  url       = {https://proceedings.neurips.cc/paper/2021/file/f1b0775946bc0329b35b823b86eeb5f5-Paper.pdf},
}

@book{zee,
    author = "Zee, A.",
    title = "{Group Theory in a Nutshell for Physicists}",
    publisher = "Princeton University Press",
    address = "USA",
    year = "2016"
}

@article{qutip,
    title={QuTiP 2: A Python framework for the dynamics of open quantum systems.},
    author={Johansson, J. R. and Nation, P. D. and Nori, F.},
    doi = {DOI: 10.1016/j.cpc.2012.11.019},
    year={2013},
    journal={Comp. Phys. Comm. 184, 1234}
}

@article{ace,
  title = {Atomic cluster expansion for accurate and transferable interatomic potentials},
  author = {Drautz, Ralf},
  journal = {Phys. Rev. B},
  volume = {99},
  issue = {1},
  pages = {014104},
  numpages = {15},
  year = {2019},
  month = {1},
  publisher = {American Physical Society},
  doi = {10.1103/PhysRevB.99.014104},
  url = {https://link.aps.org/doi/10.1103/PhysRevB.99.014104}
}

@book{rao,
    author = "K.N. Srinivasa Rao",
    title = "{The Rotation and Lorentz Groups and Their Representations for Physicists}",
    publisher = "Wiley Eastern Limited",
    address = "India",
    year = "1988",
}

@article{md_batzner2021,
  doi = {10.1038/s41467-022-29939-5},
  url = {https://doi.org/10.1038/s41467-022-29939-5},
  year = {2022},
  month = may,
  publisher = {Springer Science and Business Media {LLC}},
  volume = {13},
  number = {1},
  author = {Simon Batzner and Albert Musaelian and Lixin Sun and Mario Geiger and Jonathan P. Mailoa and Mordechai Kornbluth and Nicola Molinari and Tess E. Smidt and Boris Kozinsky},
  title = {E(3)-equivariant graph neural networks for data-efficient and accurate interatomic potentials},
  journal = {Nature Communications}
}

@misc{yang2020feature,
    title={Feature Learning in Infinite-Width Neural Networks},
    author={Greg Yang and Edward J. Hu},
    year={2020},
    eprint={2011.14522},
    archivePrefix={arXiv},
    primaryClass={cs.LG}
}

@article{DRISCOLL1994202,
title = {Computing Fourier Transforms and Convolutions on the 2-Sphere},
journal = {Advances in Applied Mathematics},
volume = {15},
number = {2},
pages = {202-250},
year = {1994},
issn = {0196-8858},
doi = {https://doi.org/10.1006/aama.1994.1008},
url = {https://www.sciencedirect.com/science/article/pii/S0196885884710086},
author = {J.R. Driscoll and D.M. Healy},
abstract = {This paper considers the problem of efficient computation of the spherical harmonic expansion, or Fourier transform, of functions defined on the two dimensional sphere, S2. The resulting algorithms are applied to the efficient computation of convolutions of functions on the sphere. We begin by proving convolution theorems generalizing well known and useful results from the abelian case. These convolution theorems are then used to develop a sampling theorem on the sphere. which reduces the calculation of Fourier transforms and convolutions of band-limited functions to discrete computations. We show how to perform these efficiently, starting with an O(n(log n)2) time algorithm for computing the Legendre transform of a function defined on the interval [−1,1] sampled at n points there. Theoretical and experimental results on the effects of finite precision arithmetic are presented. The Legendre transform algorithm is then generalized to obtain an algorithm for the Fourier transform, requiring O(n(log n)2) time, and an algorithm for its inverse in O(n1.5) time, where n is the number of points on the sphere at which the function is sampled. This improves the naive O(n2) bound, which is the best previously known. These transforms give an O(n1.5) algorithm for convolving two functions on the sphere.}
}

@article{Rehfeld,
author = {Rehfeld, K.},
title = {Group theoretical techniques in quantum chemistry. Von C. D. H. Chisholm. Academic Press Inc., London-New York 1976. 271 S},
journal = {Materials and Corrosion},
volume = {29},
number = {1},
pages = {79-79},
doi = {https://doi.org/10.1002/maco.19780290125},
url = {https://onlinelibrary.wiley.com/doi/abs/10.1002/maco.19780290125},
eprint = {https://onlinelibrary.wiley.com/doi/pdf/10.1002/maco.19780290125},
year = {1978}
}

@misc{sh_wikipedia_2021, 
title={Table of spherical harmonics}, 
url={https://en.wikipedia.org/w/index.php?title=Table_of_spherical_harmonics&amp;oldid=1014366272#Real_spherical_harmonics}, 
journal={Wikipedia}, 
publisher={Wikimedia Foundation}, 
year={2021}, 
month={3}
}

@article{schnet2,
author = {Schütt,K. T.  and Sauceda,H. E.  and Kindermans,P.-J.  and Tkatchenko,A.  and Müller,K.-R. },
title = {SchNet – A deep learning architecture for molecules and materials},
journal = {The Journal of Chemical Physics},
volume = {148},
number = {24},
pages = {241722},
year = {2018},
doi = {10.1063/1.5019779},
URL = {https://doi.org/10.1063/1.5019779},
eprint = {https://doi.org/10.1063/1.5019779}
}

@article{CGCNN,
  title = {Crystal Graph Convolutional Neural Networks for an Accurate and Interpretable Prediction of Material Properties},
  author = {Xie, Tian and Grossman, Jeffrey C.},
  journal = {Phys. Rev. Lett.},
  volume = {120},
  issue = {14},
  pages = {145301},
  numpages = {6},
  year = {2018},
  month = {4},
  publisher = {American Physical Society},
  doi = {10.1103/PhysRevLett.120.145301},
  url = {https://link.aps.org/doi/10.1103/PhysRevLett.120.145301}
}

@article{schnet,
author = {Schütt,K. T.  and Sauceda,H. E.  and Kindermans,P.-J.  and Tkatchenko,A.  and Müller,K.-R. },
title = {SchNet – A deep learning architecture for molecules and materials},
journal = {The Journal of Chemical Physics},
volume = {148},
number = {24},
pages = {241722},
year = {2018},
doi = {10.1063/1.5019779},

URL = { 
        https://doi.org/10.1063/1.5019779
    
},
eprint = { 
        https://doi.org/10.1063/1.5019779
    
}

}

@article{PhysRevLett.120.036002,
  title = {Symmetry-Adapted Machine Learning for Tensorial Properties of Atomistic Systems},
  author = {Grisafi, Andrea and Wilkins, David M. and Cs\'anyi, G\'abor and Ceriotti, Michele},
  journal = {Phys. Rev. Lett.},
  volume = {120},
  issue = {3},
  pages = {036002},
  numpages = {6},
  year = {2018},
  month = {1},
  publisher = {American Physical Society},
  doi = {10.1103/PhysRevLett.120.036002},
  url = {https://link.aps.org/doi/10.1103/PhysRevLett.120.036002}
}

@misc{e3nn,
  author       = {Mario Geiger and
                  Tess E. Smidt and
                  Benjamin K. Miller and
                  Wouter Boomsma and
                  Kostiantyn Lapchevskyi and
                  Maurice Weiler and
                  Michał Tyszkiewicz and
                  Jes Frellsen},
  title        = {\texttt{e3nn}: a modular PyTorch framework for Euclidean neural networks},
  howpublished = {\url{https://github.com/e3nn/e3nn}},
  month        = {7},
  year         = 2020,
  publisher    = {Zenodo},
  version      = {0.1.0},
  doi          = {10.5281/zenodo.3723557},
  url          = {https://doi.org/10.5281/zenodo.3723557}
}

@misc{tfn,
Author = {Nathaniel Thomas and Tess E. Smidt and Steven Kearnes and Lusann Yang and Li Li and Kai Kohlhoff and Patrick Riley},
Title = {Tensor field networks: Rotation- and translation-equivariant neural networks for 3D point clouds},
Year = {2018},
Eprint = {arXiv:1802.08219},
}

@inproceedings{kondor2018clebsch,
  title={Clebsch--gordan nets: a fully fourier space spherical convolutional neural network},
  author={Kondor, Risi and Lin, Zhen and Trivedi, Shubhendu},
  booktitle={Advances in Neural Information Processing Systems 32},
  pages={10117--10126},
  year={2018}
}

@inproceedings{3dsteerable,
 author = {Weiler, Maurice and Geiger, Mario and Welling, Max and Boomsma, Wouter and Cohen, Taco},
 title = {{3D Steerable CNNs: Learning Rotationally Equivariant Features in Volumetric Data}},
 booktitle = {Advances in Neural Information Processing Systems 32},
 year = {2018},
 pages = {10402--10413},
 numpages = {12},
}

@article{
s.2018spherical,
title={Spherical {CNN}s},
author={T. Cohen and M. Geiger and J. K{\"{o}}hler and M. Welling},
journal={International Conference on Learning Representations},
year={2018},
url={https://openreview.net/forum?id=Hkbd5xZRb},
}

@article{soap,
  title={On representing chemical environments},
  author={Bart{\'o}k, Albert P and Kondor, Risi and Cs{\'a}nyi, G{\'a}bor},
  journal={Physical Review B},
  volume={87},
  number={18},
  pages={184115},
  year={2013},
  publisher={APS}
}

@article{behler-2007-gener-neural,
  author =       {J{\"o}rg Behler and Michele Parrinello},
  title =        {Generalized Neural-Network Representation of High-Dimensional
                  Potential-Energy Surfaces},
  journal =      {Phys. Rev. Lett.},
  volume =       98,
  number =       14,
  pages =        146401,
  year =         2007,
  doi =          {10.1103/physrevlett.98.146401},
  url =          {http://dx.doi.org/10.1103/PhysRevLett.98.146401},
  date_added =   {Sat Mar 21 09:55:22 2015},
}

@misc{En_gnn,
Author = {Victor Garcia Satorras and Emiel Hoogeboom and Max Welling},
Title = {E(n) Equivariant Graph Neural Networks},
Year = {2021},
Eprint = {arXiv:2102.09844},
}

@misc{equiv_message_passing,
Author = {Kristof T. Schütt and Oliver T. Unke and Michael Gastegger},
Title = {Equivariant message passing for the prediction of tensorial properties and molecular spectra},
Year = {2021},
Eprint = {arXiv:2102.03150},
}

@inproceedings{dimenet,
  author    = {Johannes Klicpera and
               Janek Gro{\ss} and
               Stephan G{\"{u}}nnemann},
  title     = {Directional Message Passing for Molecular Graphs},
  booktitle = {8th International Conference on Learning Representations, {ICLR} 2020,
               Addis Ababa, Ethiopia, April 26-30, 2020},
  publisher = {OpenReview.net},
  year      = {2020},
  url       = {https://openreview.net/forum?id=B1eWbxStPH},
  timestamp = {Thu, 07 May 2020 17:11:48 +0200},
  biburl    = {https://dblp.org/rec/conf/iclr/KlicperaGG20.bib},
  bibsource = {dblp computer science bibliography, https://dblp.org}
}

@article{SmidtGeiger2021,
  doi = {10.1103/physrevresearch.3.l012002},
  url = {https://doi.org/10.1103/physrevresearch.3.l012002},
  year = {2021},
  month = jan,
  publisher = {American Physical Society ({APS})},
  volume = {3},
  number = {1},
  author = {Tess E. Smidt and Mario Geiger and Benjamin Kurt Miller},
  title = {Finding symmetry breaking order parameters with Euclidean neural networks},
  journal = {Physical Review Research}
}

@article{phonons_Chen2021,
  doi = {10.1002/advs.202004214},
  url = {https://doi.org/10.1002/advs.202004214},
  year = {2021},
  month = mar,
  publisher = {Wiley},
  pages = {2004214},
  author = {Zhantao Chen and Nina Andrejevic and Tess Smidt and Zhiwei Ding and Qian Xu and Yen-Ting Chi and Quynh T. Nguyen and Ahmet Alatas and Jing Kong and Mingda Li},
  title = {Direct Prediction of Phonon Density of States With Euclidean Neural Networks},
  journal = {Advanced Science}
}

@misc{mri_muller2021,
Author = {Philip Müller and Vladimir Golkov and Valentina Tomassini and Daniel Cremers},
Title = {Rotation-Equivariant Deep Learning for Diffusion MRI},
Year = {2021},
Eprint = {arXiv:2102.06942},
}

@misc{miller2020,
Author = {Benjamin Kurt Miller and Mario Geiger and Tess E. Smidt and Frank Noé},
Title = {Relevance of Rotationally Equivariant Convolutions for Predicting Molecular Properties},
Year = {2020},
Eprint = {arXiv:2008.08461},
}

@inproceedings{se3_transformers,
  author    = {Fabian Fuchs and
               Daniel E. Worrall and
               Volker Fischer and
               Max Welling},
  editor    = {Hugo Larochelle and
               Marc'Aurelio Ranzato and
               Raia Hadsell and
               Maria{-}Florina Balcan and
               Hsuan{-}Tien Lin},
  title     = {SE(3)-Transformers: 3D Roto-Translation Equivariant Attention Networks},
  booktitle = {Advances in Neural Information Processing Systems 33: Annual Conference
               on Neural Information Processing Systems 2020, NeurIPS 2020, December
               6-12, 2020, virtual},
  year      = {2020},
  url       = {https://proceedings.neurips.cc/paper/2020/hash/15231a7ce4ba789d13b722cc5c955834-Abstract.html},
  timestamp = {Tue, 19 Jan 2021 15:57:40 +0100},
  biburl    = {https://dblp.org/rec/conf/nips/FuchsW0W20.bib},
  bibsource = {dblp computer science bibliography, https://dblp.org}
}

@misc{universality_tfn_dymhaggai2020,
Author = {Nadav Dym and Haggai Maron},
Title = {On the Universality of Rotation Equivariant Point Cloud Networks},
Year = {2020},
Eprint = {arXiv:2010.02449},
}

@misc{finzi21_2104.09459,
Author = {Marc Finzi and Max Welling and Andrew Gordon Wilson},
Title = {A Practical Method for Constructing Equivariant Multilayer Perceptrons for Arbitrary Matrix Groups},
Year = {2021},
Eprint = {arXiv:2104.09459},
}

\appendix

\end{document}